\algrenewcommand\algorithmicrequire{\textbf{Input:}}
\algrenewcommand\algorithmicensure{\textbf{Output:}}
\newtheorem{defi}{Definition}[section]
\begin{document}

\title{
Multidimensional Data Analysis Based on Block Convolutional Tensor Decomposition
}

\author{\name Mahdi Molavi \email  \\
    \addr Department of Computer Science, Tarbiat Modares, Tehran-Iran
    \AND
    \name Mansoor Rezghi \email Rezghi@modares.ac.ir \\
    \addr Department of Computer Science, Tarbiat Modares, Tehran-Iran
    \AND
    \name Tayyebeh Saeedi \\
    \addr Department of Computer Science, Tarbiat Modares, Tehran-Iran}

\editor{My editor}

\maketitle

\begin{abstract}
    Tensor decompositions are powerful tools for analyzing multi-dimensional data in their original format. Besides tensor decompositions like Tucker and CP, Tensor SVD (t-SVD)  which is based on the t-product of tensors is another extension of SVD to tensors that  recently developed and has found numerous applications in analyzing high dimensional data. This paper offers a new insight into the t-Product  and shows that this product is a block convolution of two tensors with periodic boundary conditions. Based on this viewpoint, we propose a new tensor-tensor product called the $\star_c{}\text{-Product}$ based on Block convolution with reflective boundary conditions. Using a tensor framework, this product can be easily extended to tensors of arbitrary order. Additionally, we introduce a tensor decomposition based on our $\star_c{}\text{-Product}$ for arbitrary order tensors. Compared to t-SVD, our new decomposition has lower complexity, and experiments show that it yields higher-quality results in applications such as classification and compression.
\end{abstract}

\begin{keywords}
    Tensor-tensor product, Tensor singular value decomposition, Convolution, Boundary condition, Reflective boundary condition.
\end{keywords}

\section{Introduction}
In recent years, the popularity of devices such as smartphones, digital cameras, and traffic cameras has led to an abundance of image and video data. To effectively use machine learning algorithms on this type of data, we need non-vector tools for representation. While matrices are well-suited for displaying gray-scale images, tensors offer an extension to matrices that can handle multidimensional data such as color images and videos. Although traditional machine learning algorithms can be applied to vectorized data, this approach has significant drawbacks. First, vectorization can destroy structural relationships between various features, such as the spatial relationships between pixels in image data. Second, folding data into vectors can lead to high-dimensional data, resulting in overfitting and the curse of dimensionality. To address these issues, tensor-based, multi-linear methods have been developed that can work directly with multidimensional data. This approach has gained significant attention in recent years, and several tensor-based algorithms have been developed for well-known machine learning algorithms such as SVM, PCA, and LDA, including STM \cite{STM}, MPCA \cite{MPCA}, and MLDA \cite{MLDA}, respectively. Moreover, tensor methods have also been used in neural network layers and deep learning to reduce network parameters while maintaining network quality.

Matrix decomposition methods, such as Singular Value Decomposition (SVD), are commonly used in machine learning applications such as classification \cite{CSVD}, clustering \cite{ClSVD}, and dimension reduction methods \cite{DSVD}, and have proven to be effective \cite{Zaki}. Due to the advantages of SVD, there have been efforts to extend it to tensors. Over the past decade, various extensions of SVD for tensors have been proposed, such as CP and Tucker \cite{CPTuker}. These methods are widely used in various applications, including EEG classification \cite{EEG}, image processing \cite{im}, rs-fMRI classification \cite{rsfMRI}, tensor robust principal component analysis, and tensor completion \cite{cp1, cp2}. Recently, Kilmer and Martin proposed a new decomposition method called t-SVD that decomposes an order-3 tensor into three tensors using a new product between tensors called the tensor-tensor product (t-Product) \cite{tSVD}.

The t-SVD decomposition has numerous applications in various fields such as tensor completion \cite{tensorcompletion}, video recovery \cite{videorec}, dynamic MRI reconstruction \cite{dMRI}, color image denoising \cite{tsvdapp4}, and tensor robust principal analysis \cite{TRPCA}. However, while the t-SVD decomposition becomes SVD on an order-2 tensor (matrix), the tensors obtained from the decomposition are not guaranteed to be real in their work and some related works \cite{Kilmer2, n-dim-kilmer}. This issue has recently been resolved by Canyi Lu et al. in \cite{TRPCA}.

Moreover, in \cite{n-dim-kilmer}, the t-Product and hence the t-SVD have been generalized for n-dimensional data. However, their definition of the t-Product involves the terms fold and unfold, which transform tensors into structured circulant block matrices. Additionally, the extension of the t-Product defined in \cite{n-dim-kilmer} for high dimensions (n-order tensors) has a recursive form, leading to increased complexity in the definition. This complexity becomes even more challenging for higher dimensions, making the definitions of t-Product and t-SVD more difficult to comprehend. For instance, in the algorithm (2) presented in \cite{n-dim-kilmer}, numerous folding and unfolding operations are used in different modes, making the implementation complex and time-consuming.

In this paper, we sight at the t-Product from another viewpoint and show that the t-Product is actually a block convolution with a periodic boundary between two tensors. Then, this viewpoint  gives us an opportunity to use image processing literature in artificial boundary conditions to define a new tensor product with better properties. Different kinds of artificial boundaries are used in image restoration for modeling of blurring process \cite{zboundary}. Zero, periodic, reflective, and anti-reflective are known boundary conditions. It has been shown that the blurring matrix with periodic and reflective boundary (by symmetric mask) is diagonalizable by Fourier and Cosine transformation respectively \cite{rboundary, pboundary}. Also, the computational complexity and quality of the modeling by the reflective boundary case are better than the periodic boundary case. So here, we use reflective boundary conditions and introduce a new tensor product called $\star_c{}\text{-Product}$ that works on order-n tensors. Finally, we present a novel tensor decomposition based on this product named $\star_c{}\text{-SVD}$. Based on this decomposition, we apply classification and clustering techniques and we show our work has better performance compared to other state-of-the-art methods.

In recent years, structured tensors have been proposed to model convolution between high-dimensional tensors. For example, in \cite{pboundary}, circulant tensors with arbitrary order were introduced and it was demonstrated that these tensors can be used to restore 3D images with periodic boundaries in a specific case. Additionally, in \cite{rezghi-toplitz}, Toeplitz tensors with arbitrary dimensions were used for convolution modeling.
In this paper, we show that t-Product and $\star_c{}\text{-Product}$ can be presented as structured Circulant and Toeplitz+Hankel tensors. This leads to the generalization of the t-Product and $\star_c{}\text{-Product}$ for any dimension using structured circulant and Toeplitz plus Hankel tensors, respectively.
The advantages of our proposed method are its simplicity in generalizing the t-Product and $\star_c{}\text{-Product}$ to arbitrary dimensions using structured circulant Toeplitz plus Hankel tensors. This method also inherits the useful properties of these products, such as their equivalent SVD. Moreover, it is easier to implement and has a lower computational cost compared to previous methods.

The rest of this paper is organized as takes after:
Section \ref{sec2} contains notations that we use in the paper and related works. Section \ref{sec3} includes of the our viewpoint to t-Product. In section \ref{sec4}, we present our new product and corresponding decomposition. Section \ref{Exp} represents some experimental results in compression, clustering, and classification.

\section{Notations and related works}\label{sec2}
\subsection{Notations}
In this paper, we denote tensors by boldface calligraphic letters, \textit{e.g.,} $ \mathcal{A} $. We use boldface uppercase letters for matrices, \textit{e.g.,} $ A $. We use the Python notation $ \mathcal{A}[:,:,i] $ to denote the $ i $-th frontal slice and it is denoted compactly as $ A_i$, also we denote the $ i $-th column and $ i $-th row of a matrix as $ A[:,i] $ and $ A[i,:]$, respectively. Vectors are denoted by boldface lowercase letters, \textit{e.g.,} $ a $, and scalars are denoted by lowercase letters, \textit{e.g.,} $ a $.

We define the mode-$\text{n}$ product of a tensor and a matrix as follows, as defined in \cite{moden1, pboundary}:
\begin{align*}
  \boldsymbol{\mathcal{G}} := (M){n}.\boldsymbol{\mathcal{Z}},
\end{align*}
where $ M \in \mathbb{R}^{J \times I_n}$, is a matrix, $ \boldsymbol{\mathcal{Z}} \in \mathbb{R}^{I_1 \times I_2 \times \cdots \times I_{n} \times \cdots \times I_N}$ is an $ N $-order tensor, and $ \boldsymbol{\mathcal{G}} \in \mathbb{R}^{I_1 \times \cdots \times I_{n-1} \times J \times I_{n+1} \times \cdots \times I_N} $.

Really,  in mode-$\text{n}$ product each mode-$\text{n}$ fiber of tensor is multiplied by matrix, \cite{moden}.
This inner product and other matrix-vector, matrix-matrix, and also matrix-tensor products can be considered as a special case of contraction product of two tensors, which is a tensor product followed by a contraction along specified modes.
For two tensors $ \mathcal{A} \in \mathbb{R}^{I_1 \times \cdots \times I_D \times J_1 \times \cdots \times J_M} $ and $ \mathcal{B} \in \mathbb{R}^{K_1 \times \cdots \times K_L \times J_1 \times \cdots \times J_M} $, their contractive product corresponding to contraction modes $ D+1, \cdots, D+M $ and $ L+1, \cdots, L+M $ of tensors $ \mathcal{A} $ and $ \mathcal{B} $, respectively, can be defined as follows \cite{alg862}
\begin{align}
  \mathcal{C} = \langle \mathcal{A}, \mathcal{B} \rangle_{D+1, \cdots, D+M; L+1, \cdots, L+M} = \langle \mathcal{A}, \mathcal{B} \rangle_{D+1:D+M; L+1:L+M}
\end{align}
where
\begin{align*}
  \boldsymbol{c}_{i_1, \cdots, i_D, k_1, \cdots, k_L} = \sum_{1 \leq j_k \leq J_k} \boldsymbol{a}_{i_1, \cdots, i_D, j_1, \cdots, j_M} \boldsymbol{b}_{k_1, \cdots, k_L, j_1, \cdots, j_M}.
\end{align*}
The extension of diagonal concept to tensors is not unique, but the general from of such concept is proposed in \cite{pboundary}, which can cover the other definitions.
For arbitrary order $\mathcal{A}$, let $ S = \{s_1, \cdots, s_t\} $ be a subset of  modes $ \{1, \cdots, N\} $,  $ \mathcal{A}={\sf diag}(\mathcal{D}) \in \mathbb{R}^{I_1 \times \cdots \times I_N} $ is  $ \{S\} $-diagonal, if  $ A_{i_1 \cdots i_N} $ can be nonzero only if $ i_{s_1} = \cdots = i_{s_t} $ and are elements of $(N-|S|+1)$-order tensor $\mathcal{D}$.
For Example
we say that $3$-order tensor $ \mathcal{A} \in \mathbb{R}^{I_1 \times I_2 \times I_3} $ is $ \{\{ 1, 3 \} \} $-diagonal with elements $D$ and denote by
$\mathcal{A}={\sf diag}_{\{1,3\}}(D)$, if
\begin{align}
  \mathcal{A} [i_1, i_2, i_3] = \delta_{i_1 i_3} D[i_2, i_3],\quad \delta_{ij} = \begin{cases}
                                                                                   1 & i=j              \\
                                                                                   0 & \text{otherwise}
                                                                                 \end{cases}
\end{align}
Also for  two disjoint subsets
$ S = \{s_1, \cdots, s_t\} $ and $ Q = \{q_1, \cdots, q_{t'}\} $ of modes $ \{1, \cdots, N\} $. $ \mathcal{A} \in \mathbb{R}^{I_1 \times \cdots \times I_N} $ is called $ \{S,Q\} $-diagonal if $ A_{i_1 \cdots i_N} $ can be nonzero only if $ i_{s_1} = \cdots = i_{s_t} $ and $ i_{q_1} = \cdots = i_{q_{t'}} $.

In the following we will face with structured circulant tensors which defined in \cite{pboundary} as follows:
$\mathcal{A} \in \mathbb{R}^{I_{1} \times I_{2} \times \cdots \times I_{N}} $ is called $ \{ l,k \} $-circulant tensor, if $ I_{l} = I_{k} = n $, and
\[ \mathcal{A}[:,\cdots,:,i_{l},:, \cdots,:, i_{k}, :, \cdots, :] = \mathcal{A}[:,\cdots,:,i'_{l},:, \cdots,:, i'_{k}, :, \cdots, :] ,\quad  \text{if}\quad i_{l} - i_{k} \equiv i'_{l}- i'_{k} \quad(\text{mod} \quad  n ).
\]
The authors in \cite{pboundary} showed that such structured tensor can be diagonalized partially by FFT, which causes fast computation of some contraction products with such tensors.

\subsection{t-Product and t-SVD}\label{subsec2_2}
For the first time, tensor-tensor product (t-Product)  for $ 3 $-order tensors
$ \mathcal{A} \in \mathbb{R}^{n_1 \times \ell \times n_3} $ and $ \mathcal{B} \in \mathbb{R}^{\ell \times n_2 \times n_3} $ is introduced  in \cite{tSVD} as follows:
\begin{align*}
    \mathbb{R}^{n_1\times n_2\times n_3}\ni \mathcal{C} = \mathcal{A} \star_t \mathcal{B} = \mathsf{fold}\left(\mathsf{circ}(\mathsf{unfold}(\mathcal{A})).\mathsf{unfold}(\mathcal{B})\right) ,
\end{align*}
which
\begin{align*}
    \mathsf{circ}( \mathsf{unfold}(\mathcal{A})) = \begin{bmatrix}
                                                       {{\boldsymbol{A}_{1}}}     & {{\boldsymbol{A}_{{n_3}}}}     & \cdots & {{\boldsymbol{A}_{2}}} \\
                                                       {{\boldsymbol{A}_{2}}}     & {{\boldsymbol{A}_{1}}}         & \cdots & {{\boldsymbol{A}_{3}}} \\
                                                       \vdots                     & \vdots                         & \ddots & \vdots                 \\
                                                       {{\boldsymbol{A}_{{n_3}}}} & {{\boldsymbol{A}_{{n_3} - 1}}} & \cdots & {{\boldsymbol{A}_{1}}}
                                                   \end{bmatrix} \in \mathbb{R}^{n_1 n_3 \times \ell n_3},
\end{align*}
is a block circulant matrix.
Also,
$ \mathsf{fold} $ and $ \mathsf{unfold} $ are the following operations:
\begin{align*}
    \mathsf{unfold}(\mathcal{A}) = \begin{bmatrix}
                                       {{\boldsymbol{A}_{1}}} \\
                                       {{\boldsymbol{A}_{2}}} \\
                                       \vdots                 \\
                                       {{\boldsymbol{A}_{{n_3}}}}
                                   \end{bmatrix} \in \mathbb{R}^{n_1 n_3 \times \ell},~~\mathsf{fold}(\mathsf{unfold}(\mathcal{A})) = \mathcal{A}.
\end{align*}

Based on the properties of block circulant matrices, t-Product can be efficiently computed using the Fast Fourier Transform (FFT) algorithm, as follows:
\begin{align*}
  \mathsf{unfold}(\mathcal{C}) = ( \boldsymbol{F}_{n_{3}}^{*} \otimes I_{n_{1}} )\left ( ( \boldsymbol{F}_{n_{3}} \otimes I_{n_{1}} ) \mathsf{circ}\left( \mathsf{unfold}(\mathcal{A})\right)  ( \boldsymbol{F}_{n_{3}}^{*} \otimes I_{\ell} ) \right) ( \boldsymbol{F}_{n_{3}} \otimes I_{\ell} ) \mathsf{unfold}(\mathcal{B}).
\end{align*}
Based on the t-Product operator \cite{tSVD}, proved that on arbitrary 3-order tensors $\mathcal{A}\in \mathbb{R}^{n_1\times n_2\times n_3}$, can be decompoed as follows
\begin{align}\label{t-SVDKilmer}
    \mathcal{A} = \mathcal{U} \star_t \mathcal{S} \star_t \mathcal{V}^{\mathsf{T}},
\end{align}
where $ \mathcal{U} \in \mathbb{R}^{n_{1} \times n_{1} \times n_{3}} $ and $ \mathcal{V} \in \mathbb{R}^{n_{2} \times n_{2} \times n_{3}} $ are orthogonal, and $ \mathcal{S} \in \mathbb{R}^{n_{1} \times n_{2} \times n_{3}} $ is a f-diagonal tensor (according to our definition $\{1,2\}$-diagonal) \cite{tSVD}. Also, the transpose $\mathcal{V}^{\sf T}$ in \eqref{t-SVDKilmer} is defined in \cite{tSVD}.
This decomposition is an extension of SVD for $3$-order tensors named t-SVD.
The scheme of this decomposition is shown in Fig. \ref{fig1}.
\begin{figure}[ht]
    \centering
    \includegraphics[width=0.8\linewidth]{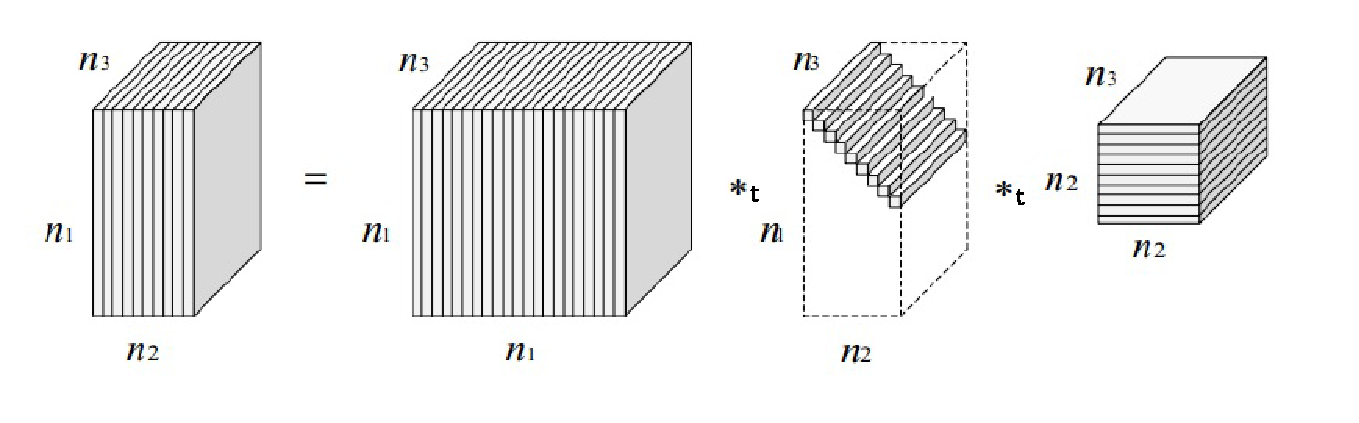}
    \caption{
        t-SVD scheme of an $ n_1 \times n_2 \times n_3 $ tensor \cite{Kilmer2},\cite{TRPCA}.
    }
    \label{fig1}
\end{figure}

Furthermore, when $n_{3} = 1$, the t-SVD reduces to the SVD of a matrix. The t-SVD, similar to SVD, has various applications including dimension reduction, classification, tensor completion, and even in deep neural networks \cite{wu2022robust}. In \cite{n-dim-kilmer}, the authors extended the t-Product and t-SVD to arbitrary order tensors in 2013.

\section{t-Product with signal processing viewpoint}\label{sec3}
This section presents a new perspective on the t-Product and demonstrates that it can be interpreted as a block convolution between two tensors. This perspective enables us to define a new tensor-tensor product with better properties than the t-Product. Another advantage of this approach is that it provides a simple framework for computing the t-SVD for higher dimensions, which is easier to implement.

Convolution is a fundamental concept in signal and image processing with many applications such as edge detection \cite{edge1, edge2}, image enhancement and deblurring \cite{blur2, blur1}. Convolution is also the basis of outstanding convolutional neural network architectures in deep learning \cite{GoodBengCour16}.

Mathematically, the convolution between two vectors $ \boldsymbol{x, h} \in \mathbb{R}^{n} $ is defined as following:
\begin{align}\label{convolutionVector}
  y_{i} = \sum h_{i-j}x_{j}, ~i = 1,\cdots,n.
\end{align}
Here, $\boldsymbol{x}$,  $\boldsymbol{y}$,
are considered as exact and filtered signals respectively and $\boldsymbol{h}$ denotes the filter kernel. In the convolution process, computing some elements of $\boldsymbol{y}$ requires signals from the boundary of $\boldsymbol{x}$. Various approaches have been proposed to deal with the boundary of $\boldsymbol{x}$, such as using zero (black) boundary {\cite{zboundary}}, repetition (periodic) boundary {\cite{pboundary}}, reflective boundary {\cite{rboundary}}, etc., which are known as artificial boundary conditions (BC). Here, we show that a block convolution with a periodic boundary condition provides insight into the t-Product from a signal processing viewpoint.

Let $ \boldsymbol{x} = [x_1, x_2, \cdots, x_n] \in \mathbb{R}^{n} $ and $ \boldsymbol{h} = [h_0, h_1, \cdots, h_{n-1}] = [a_1, a_2, \cdots, a_n] = \boldsymbol{a} \in \mathbb{R}^{n} $, became the input signal and nonzero part of the kernel $ \{h_i\} $, respectively. By considering the periodic boundary condition for signal $ \boldsymbol{x} $ in convolution process \eqref{convolutionVector}, the out side of $ \boldsymbol{x}=[x_1, x_2, \cdots, x_n] $ will be
\begin{align}\label{rez1}
  x_i=
  \begin{cases}
    x_{n+i}, & i=1-n,\cdots,0  \\
    x_{i-n}, & i=n+1,\cdots,2n
  \end{cases},
\end{align}
or schematically  as shown in Fig. \ref{figpb}.
\begin{figure}[!ht]
  \centering
  \includegraphics[width=0.8\linewidth]{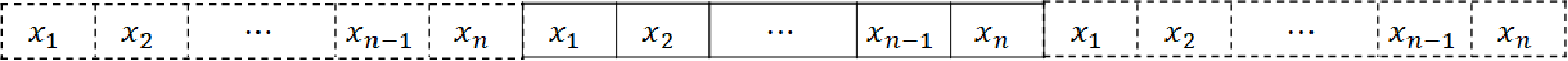}
  \caption{Scheme of the periodic boundary condition. The dashed part is the periodic boundary of signal $ \boldsymbol{x} $.}\label{figpb}
\end{figure}
By this assumption, $ y_i $ for $ i = 1, \cdots, n $ will be
\begin{align}\label{eq4}
  y_i = \sum_{j = 1}^{i} a_{i-j+1}x_{j} + \sum_{j = i+1}^{n} a_{n+1+i-j}x_{j}, \qquad i = 1, \cdots, n.
\end{align}
By defining the circulant  matrix $ \mathsf{circ}(\boldsymbol{a}) \in \mathbb{R}^{n \times n} $ as follows:
\begin{align}\label{Rez2}
  \mathsf{circ}(\boldsymbol{a})[i,j] = a_{k}, \quad \mathrm{where} \quad k =
  \begin{cases}
    i-j+1,   & i \geq j \\
    n+1+i-j, & i < j
  \end{cases}, \qquad i,j = 1, \cdots, n,
\end{align}
the equation \eqref{eq4} will be:
\begin{align}\label{bcirc}
  \begin{bmatrix}
    y_1    \\
    y_2    \\
    \vdots \\
    y_n
  \end{bmatrix} =
  \begin{bmatrix}
    {{a_{1}}}   & {{a_{{n}}}}     & \cdots & {{a_{2}}} \\
    {{a_{2}}}   & {{a_{1}}}       & \cdots & {{a_{3}}} \\
    \vdots      & \vdots          & \ddots & \vdots    \\
    {{a_{{n}}}} & {{a_{{n} - 1}}} & \cdots & {{a_{1}}}
  \end{bmatrix}
  \begin{bmatrix}
    x_1    \\
    x_2    \\
    \vdots \\
    x_n
  \end{bmatrix}
  \quad \mathrm{or} \quad\boldsymbol{y} = \mathsf{circ}(\boldsymbol{a})\boldsymbol{x}.
\end{align}

According to the properties of the circulant matrix \cite{pboundary}, we will have $ \boldsymbol{y} = \mathsf{ifft}(\mathsf{fft}(\boldsymbol{a})\odot \mathsf{fft}(\boldsymbol{x})) $, which shows that $ \boldsymbol{y} $ could be computed very fast by FFT without the need to construct $ \mathsf{circ}(\boldsymbol{a}) $.
Here $\odot$ denotes Hadamard (point-wise) product.

Now, if the elements $a_i, x_i$ and $y_i$ in convolution with periodic BC, are substituted with matrices
$\mathcal{A}[:,i,:]=A_{i} \in \mathbb{R}^{n_1\times n_3}$,
$\mathcal{X}[:,i,:]=X_i \in \mathbb{R}^{n_3\times n_4}$, and $\mathcal{Y}[:,i,:]= Y_i \in \mathbb{R}^{n_1\times n_4}, ~ i=1,\cdots, n_2$, the block version of equations \eqref{rez1} and \eqref{eq4} will be
\begin{align}\label{Rez3}
  X_{i} =
  \begin{cases}
    X_{n_{2}+i}, & i = 0, \cdots, 1-n_{2}      \\
    X_{i-n_{2}}, & i = n_{2}+1, \cdots, 2n_{2}
  \end{cases},
\end{align}
and
\begin{align}\label{Y_conv}
  \mathcal{Y}[:,i,:] = \sum_{j = 1}^{i} \mathcal{A}[:, i-j+1, :]X_{j} + \sum_{j = i+1}^{n_{2}} \mathcal{A}[:, n_{2}+1+i-j,:]X_{j}, \qquad i = 1, \cdots, n_{2}.
\end{align}
So, this equation is the bock convolution with periodic BC of tensor $\mathcal{A}\in \mathbb{R}^{n_1\times n_2\times n_3}$ and $\mathcal{X}\in \mathbb{R}^{n_3\times n_2\times n_4}$ in mode-2 which gives $\mathcal{Y}\in \mathbb{R}^{n_1\times n_2 \times n_4}$.
Similar to definition of $\mathsf{circ}(\boldsymbol{a})$ in \eqref{Rez2}, by defining a $ 4 $-order tensor $ \text{Tcirc}(\mathcal{A}) \in \mathbb{R}^{n_1 \times n_{2} \times n_3 \times n_{2}} $, with elements,
\begin{align}\label{eq_plus}
  \text{Tcirc}(\mathcal{A})[:,i,:,j] = \mathcal{A}[:,k,:], \quad \mathrm{where} \quad k =
  \begin{cases}
    i-j+1,       & i \geq j \\
    n_{2}+1+i-j, & i < j
  \end{cases}, \qquad i,j = 1, \cdots, n_{2},
\end{align}
the equation \eqref{Y_conv} becomes:
\begin{align}\label{Y_tCirc}
  \mathcal{Y}[:,i,:] = \sum_{j=1}^{n_{2}} \text{Tcirc}(\mathcal{A})[:,i,:,j]\mathcal{X}[:,j,:], \qquad i =1,\cdots,n_{2}.
\end{align}
Which is equal to the following contraction:
\begin{align}\label{(R)}
  \mathcal{Y} = \left \langle \text{Tcirc}(\mathcal{A}), \mathcal{X} \right \rangle_{3:4;1:2}.
\end{align}
between $ \text{Tcirc}(\mathcal{A}) $ and $ \mathcal{X} $,
Furthermore, by folding this equation, we get the following equation
\begin{align}\label{BCMatrix}
  \begin{bmatrix}
    Y_1    \\
    Y_2    \\
    \vdots \\
    Y_n
  \end{bmatrix} =
  \begin{bmatrix}
    {{A_{1}}}   & {{A_{{n}}}}     & \cdots & {{A_{2}}} \\
    {{A_{2}}}   & {{A_{1}}}       & \cdots & {{A_{3}}} \\
    \vdots      & \vdots          & \ddots & \vdots    \\
    {{A_{{n}}}} & {{A_{{n} - 1}}} & \cdots & {{A_{1}}}
  \end{bmatrix}
  \begin{bmatrix}
    X_1    \\
    X_2    \\
    \vdots \\
    X_n
  \end{bmatrix}.
\end{align}
This shows that our tensor viewpoint is equal to the defined block form in \cite{tSVD} of t-product only by reordering of modes 2 and 3.

The tensor $ \text{Tcirc}(\mathcal{A}) $ according to definition of circulant tensors in \cite{pboundary}, is an $\{2,4\}$- circulant tensor, which enables us to calculate the contraction \eqref{(R)}, without constructing $ \text{Tcirc}(\mathcal{A}) $ and
very fast by FFT on mode-2 of tensors $\mathcal{A}, \mathcal{X}$ as follows:
\begin{lemma}
  The equation $ \mathcal{Y} = \langle \text{Tcirc}(\mathcal{A}), \mathcal{X} \rangle_{3{:}4;1{:}2} $ could be computed very fast by fast Fourier transform as follows:
  \begin{align*}
    \mathcal{Y} = \left( F^{*} \right)_{3}.{\mathcal{\bar{Y}}}
  \end{align*}
  where $ {\mathcal{\bar{Y}}}[:,i,:] = {\mathcal{\bar{A}}}[:,i,:]{\mathcal{\bar{X}}}[:,i,:] $, $ {\mathcal{\bar{A}}} = \left( F \right)_{3}.\mathcal{A} $, and $ {\mathcal{\bar{X}}} = \left( F \right)_{3}.\mathcal{X} $, and $F$, $F^{*}$ are Fourier transform and its conjugate transpose, respectively.
\end{lemma}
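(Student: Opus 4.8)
The plan is to push the contraction down to the folded matrix identity \eqref{BCMatrix}, where it is literally a block-circulant matrix acting on a stacked copy of $\mathcal{X}$, and then to invoke the block analogue of the scalar diagonalization already recorded after \eqref{bcirc}, namely $\boldsymbol{y}=\mathsf{ifft}(\mathsf{fft}(\boldsymbol{a})\odot\mathsf{fft}(\boldsymbol{x}))$. Concretely, by \eqref{Y_tCirc}--\eqref{(R)} together with the folding in \eqref{BCMatrix}, the contraction $\langle\text{Tcirc}(\mathcal{A}),\mathcal{X}\rangle_{3:4;1:2}$ is exactly
\[ \mathsf{unfold}(\mathcal{Y}) = \mathsf{bcirc}(\mathcal{A})\,\mathsf{unfold}(\mathcal{X}), \]
where $\mathsf{bcirc}(\mathcal{A})$ is the block-circulant matrix of \eqref{BCMatrix} built from the second-mode slices $A_i=\mathcal{A}[:,i,:]$ and $\mathsf{unfold}$ stacks the slices $X_j=\mathcal{X}[:,j,:]$ vertically. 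This is the block version of the reduction $\boldsymbol{y}=\mathsf{circ}(\boldsymbol{a})\boldsymbol{x}$ in \eqref{bcirc}, so the second (convolution) mode plays the role of the circulant index.

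Next comes the crux. Using that $\text{Tcirc}(\mathcal{A})$ is $\{2,4\}$-circulant and the partial FFT-diagonalization of such tensors from \cite{pboundary} (equivalently, the block form of $\mathsf{circ}(\boldsymbol{a})=F^{*}\,\mathrm{diag}(\widehat{\boldsymbol a})\,F$), I would write, with $F$ the $n_2\times n_2$ unitary Fourier matrix,
\[ \mathsf{bcirc}(\mathcal{A}) = (F^{*}\otimes I_{n_1})\,\mathrm{blkdiag}(\bar A_1,\dots,\bar A_{n_2})\,(F\otimes I_{n_3}), \]
where the $\bar A_i$ are the second-mode slices of $\bar{\mathcal{A}}=(F)_2.\mathcal{A}$. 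Substituting into the folded identity gives $\mathsf{unfold}(\mathcal{Y})=(F^{*}\otimes I_{n_1})\,\mathrm{blkdiag}(\bar A_i)\,(F\otimes I_{n_3})\,\mathsf{unfold}(\mathcal{X})$.

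Finally I would translate each Kronecker action back into a mode-2 product. The point is that multiplying the vertically stacked slices by $F\otimes I$ is the same as applying $F$ along the convolution mode, i.e. $(F\otimes I_{n_3})\mathsf{unfold}(\mathcal{X})=\mathsf{unfold}(\bar{\mathcal{X}})$ with $\bar{\mathcal{X}}=(F)_2.\mathcal{X}$; because the middle factor is block-diagonal, its action is slicewise, yielding $\bar{\mathcal{Y}}[:,i,:]=\bar A_i\bar X_i=\bar{\mathcal{A}}[:,i,:]\,\bar{\mathcal{X}}[:,i,:]$; and the leading $(F^{*}\otimes I_{n_1})$ is the inverse transform, so $\mathcal{Y}=(F^{*})_2.\bar{\mathcal{Y}}$, which is the claim.

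The main obstacle I expect is bookkeeping rather than a deep idea: fixing the Fourier normalization so that $F^{*}$ is genuinely the inverse with no stray $1/n_2$ surviving, and, above all, justifying cleanly the correspondence between the mode product $(F)_2.\mathcal{Z}$ and the Kronecker action $(F\otimes I)\,\mathsf{unfold}(\mathcal{Z})$, i.e. that permuting into block-stacked form and multiplying by $F\otimes I$ really implements the FFT along the circulant mode. Once that correspondence is pinned down, the block-diagonalization of \cite{pboundary} and the slicewise multiplication follow immediately.
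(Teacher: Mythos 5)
Your argument is correct in substance, but it takes a genuinely different route from the paper's. The paper disposes of this lemma in one line by citing Theorem 5.1 and Corollary 5.4 of the reference on circulant tensors: since $\text{Tcirc}(\mathcal{A})$ is a $\{2,4\}$-circulant tensor, those results give its partial diagonalization by the DFT in modes $2$ and $4$ directly in the tensor-contraction language, and the slicewise formula falls out. You instead fold the contraction \eqref{(R)} back down to the block-circulant matrix identity \eqref{BCMatrix} and invoke the classical Kronecker diagonalization $(F^{*}\otimes I)\,\mathrm{blkdiag}(\cdot)\,(F\otimes I)$, then translate the Kronecker factors back into mode-$2$ products. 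What your route buys is self-containment: it needs nothing beyond the standard block-circulant fact and is fully elementary. What it costs is exactly the fold/unfold bookkeeping that the paper's tensor formalism was introduced to avoid, and it is tied to the order-$3$ case, whereas the circulant-tensor citation extends verbatim to the arbitrary-order product \eqref{Ntproduct}. Two small points you should make explicit if you write this up: first, the lemma as printed applies $F$ in mode $3$, but the convolution mode and the slicewise formula $\bar{\mathcal{Y}}[:,i,:]=\bar{\mathcal{A}}[:,i,:]\bar{\mathcal{X}}[:,i,:]$ make clear it must be mode $2$ (you silently and correctly use mode $2$); second, the normalization issue you flag is real --- with a unitary $F$ the diagonal blocks acquire a factor $\sqrt{n_2}$, and with the unnormalized DFT one must replace $F^{*}$ by $F^{-1}=\tfrac{1}{n_2}F^{*}$ --- so the identity holds exactly only under the $\mathsf{fft}/\mathsf{ifft}$ convention already used after \eqref{bcirc}, not literally with ``$F$ and its conjugate transpose.''
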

\begin{proof}
  This can be proved  based on Theorem 5.1 and Corollary 5.4 in \cite{pboundary}.
\end{proof}
This notation and representation of t-product by tensor form \eqref{(R)} give us a viewpoint that comfort the extension of t-product to arbitrary order.

Now, if we substitute the block components $A_i, X_i$ of Block convolution \eqref{Y_conv} and its corresponding boundary \eqref{Rez3} by the slices $\mathcal{A}[:,i_2,\cdots,i_{N-1},:]$ and
$\mathcal{X}[:,i_2,\cdots,i_{N-1},:]$ of tensors $ \mathcal{A} \in \mathbb{R}^{I_{1} \times I_{2} \times \cdots \times I_{N-1} \times J} $, $ \mathcal{X} \in \mathbb{R}^{J \times I_{2} \times \cdots \times I_{N}} $, we have $\{2,\cdots,N-1\}-$period padding of $\mathcal{X}$ as follows
\begin{align}\label{eq_Kj}
  \mathcal{X}[:, i_{2}, i_{3}, \cdots, i_{N-1}, :] = \mathcal{X}[:, s_{2}, s_{3}, \cdots, s_{N-1},:],\quad
  s_{j} =
  \begin{cases}
    I_{j} + i_{j}, & i_{j} = 0, \cdots, 1- I_{j}     \\
    i_{j} - I_{j}, & i_{j} = I_{j}+1, \cdots, 2I_{j}
  \end{cases}.
\end{align}
and convolution \eqref{Y_conv}  will be
\begin{align}\label{Nbconv2}
  \mathcal{Y}[:,\bar{i},:] = \sum_{j_2,\cdots,j_{N-1}}^{I_2,\cdots,I_{N-1}} \text{Tcirc}(\mathcal{A})[:,\bar{i},:,\bar{j},:]\mathcal{X}[:,\bar{j},:],
\end{align}
where  $ \bar{i} = i_{2}, \cdots, i_{N-1}  $, $ \bar{j} =  j_{2}, \cdots, j_{N-1}  $. Here $ \text{Tcirc}(\mathcal{A}) \in \mathbb{R}^{I_{1} \times I_{2} \times \cdots \times I_{N-1} \times J \times I_{2} \times \cdots \times I_{N-1}} $ defined as
\begin{align}
  \text{Tcirc}(\mathcal{A})[:,\bar{i},:\bar{j}] = \mathcal{A}[:, \bar{k}, :],\qquad \bar{k}=k_2,\cdots,k_{N-1}, \quad k_l=
  \begin{cases}
    i_l-j_l+1,       & i_l \geq j_l \\
    n_{l}+1+i_l-j_l, & i_l < j_l
  \end{cases},  i_l,j_l = 1, \cdots, n_{l},
\end{align}
is a $ \{ 2{:}N-1;N+1{:}2N-2 \} $-circulant tensor based on definition of circulant tensor in \cite{pboundary}. So, we call \eqref{Nbconv2} as a $\{2,\cdots,N-1\}-$mode block convolution of tensors $ \mathcal{A} \in \mathbb{R}^{I_{1} \times I_{2} \times \cdots \times I_{N-1} \times J} $, $ \mathcal{X} \in \mathbb{R}^{J \times I_{2} \times \cdots \times I_{N}} $, with periodic padding in modes $\{2,\cdots,N-1\}$, which gives $\mathcal{Y}\in \mathbb{R}^{I_1\times I_2\times \cdots \times I_{N-1}\times I_N}$.
It's clear that
the block convolution \eqref{Nbconv2} is equal to the following contraction:
\begin{align}\label{Ntproduct}
 \mathcal{A}\star_t \mathcal{X}:= \mathcal{Y}= \langle \text{Tcirc}(\mathcal{A}), \mathcal{X} \rangle_{N{:}2(N-1);1{:}N-1}
\end{align}
Therefor, we define the t-product of two N-order tensors
$ \mathcal{A} \in \mathbb{R}^{I_{1} \times I_{2} \times \cdots \times I_{N-1} \times J} $, $ \mathcal{X} \in \mathbb{R}^{J \times I_{2} \times \cdots \times I_{N}} $ as equation \eqref{Ntproduct}. By some mathematical manipulation, we could show that this arbitrary order t-product definition is equal to extension of t-product in \cite{n-dim-kilmer}. But our viewpoint gives an instrument that computing this t-product becomes more straightforward in notation, complexity, and computation than the definition in \cite{n-dim-kilmer}.

Since $ \text{Tcirc}(\mathcal{A}) $ is a $ \{ 2{:}N-1;N+1{:}2N-2 \} $-circulant tensor from \cite{pboundary}, we could prove that equation \eqref{Ntproduct}
could be done without construction of $ \text{t-circ}(\mathcal{A}) $ only by applying
of FFT on tensors $\mathcal{A}$ and $\mathcal{X}$ as shown in Algorithm \ref{n-t-Product-Us}.
\begin{algorithm}[ht]
  \caption{t-Product for arbitrary order tensors}\label{n-t-Product-Us}
  \begin{algorithmic}[1]
    \Require  $\mathcal{A}\in \mathbb{R}^{I_{1} \times I_{2} \times \cdots \times I_{N-1}\times J} $,
    $ \mathcal{X}\in \mathbb{R}^{J\times I_{2} \times  I_{3} \times \cdots \times I_{N}} $
    \Ensure $\mathcal{Y}\in \mathbb{R}^{ I_{1} \times \cdots \times I_{N}}$
    \State $ {\bar{\mathcal{A}}}=\left( F, \cdots, F \right)_{2:N-1}.\mathcal{A}$,
    $ {\bar{\mathcal{Y}}}=\left( F, \cdots, F \right)_{2:N-1}.\mathcal{Y} $
    \State
    ${\mathcal{\bar{Y}}}[:,\bar{i},:] = {\mathcal{\bar{A}}}[:,\bar{i},:]{\mathcal{\bar{X}}}[:,\bar{i},:],\quad \bar{i}=i_2,\cdots,i_{N-1},i_k=1,\cdots, I_k $
    \State ${\mathcal{Y}} = \left( F^{*}, \cdots, F^{*} \right)_{2:N-1}.{\mathcal{\bar{Y}}}$
  \end{algorithmic}
\end{algorithm}
\begin{theorem}[t-SVD for arbitrary order tensors]
  For each tensor $\mathcal{A}\in \mathbb{R}^{I_1\times \cdots \times I_N}$ with arbitrary order $N$, There exists the following decomposition of:
  \[ \mathcal{A}=\mathcal{U}\star_t \mathcal{S}\star_t \mathcal{V}^{T}\]
\end{theorem}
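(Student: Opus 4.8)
The plan is to reduce the $N$-order $\star_t$-SVD to a family of ordinary matrix SVDs in the Fourier domain, exactly mirroring the way Algorithm~\ref{n-t-Product-Us} reduces the $\star_t$-product to slice-wise matrix multiplications. First I would apply the multidimensional DFT along modes $2,\dots,N-1$, writing $\bar{\mathcal{A}}=(F,\dots,F)_{2:N-1}.\mathcal{A}$. By the Lemma together with its multi-mode generalization underlying Algorithm~\ref{n-t-Product-Us}, the $\star_t$-product becomes, in this transformed domain, a collection of independent matrix products acting on the mode-$(1,N)$ slices indexed by $\bar{i}=(i_2,\dots,i_{N-1})$.

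Second, for each fixed $\bar{i}$ I would take the matrix $\bar{A}_{\bar i}:=\bar{\mathcal{A}}[:,\bar i,:]\in\mathbb{C}^{I_1\times J}$ and compute its ordinary complex SVD $\bar{A}_{\bar i}=\bar{U}_{\bar i}\bar{S}_{\bar i}\bar{V}_{\bar i}^{*}$. Stacking these factors slice by slice defines transform-domain tensors $\bar{\mathcal{U}},\bar{\mathcal{S}},\bar{\mathcal{V}}$, where $\bar{\mathcal{S}}$ is $\{1,2\}$-diagonal on every slice. I would then define $\mathcal{U},\mathcal{S},\mathcal{V}$ as the inverse transforms $(F^{*},\dots,F^{*})_{2:N-1}$ of these tensors. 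Since the transform is unitary and $\star_t$ is slice-wise multiplication in the transformed domain, feeding $\mathcal{U}\star_t\mathcal{S}\star_t\mathcal{V}^{T}$ through Algorithm~\ref{n-t-Product-Us} reproduces $\bar{U}_{\bar i}\bar{S}_{\bar i}\bar{V}_{\bar i}^{*}=\bar{A}_{\bar i}$ on every slice, so the product equals $\mathcal{A}$ after the inverse transform. The $\star_t$-orthogonality of $\mathcal{U}$ and $\mathcal{V}$ follows from the unitarity of each $\bar{U}_{\bar i},\bar{V}_{\bar i}$, once the transpose $\mathcal{V}^{T}$ is defined so that its transform-domain slices are the conjugate transposes $\bar{V}_{\bar i}^{*}$ and the identity tensor under $\star_t$ corresponds to identity slices in the transformed domain.

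The main obstacle is ensuring the resulting factor tensors are \emph{real}, which is precisely the gap present in the original construction of \cite{tSVD,n-dim-kilmer}. Because $\mathcal{A}$ is real, its DFT is conjugate-symmetric: the slice at $\bar{i}$ is the entrywise conjugate of the slice at the reflected multi-index $(I_l-i_l+2\bmod I_l)_l$. An unconstrained per-slice choice of SVD will not in general invert to real factors. I would resolve this as in \cite{TRPCA}: for each conjugate-paired pair of slices, compute the SVD of one slice freely and set the SVD factors of its partner to be the entrywise conjugates, while for self-paired (real) slices the factors are chosen real. This forces $\bar{\mathcal{U}},\bar{\mathcal{S}},\bar{\mathcal{V}}$ to be conjugate-symmetric in modes $2,\dots,N-1$, so that their inverse DFTs $\mathcal{U},\mathcal{S},\mathcal{V}$ are real. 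With this symmetric bookkeeping in place, the remaining verifications that $\mathcal{S}$ is $\{1,2\}$-diagonal and that orthogonality transfers back from the transform domain are routine.
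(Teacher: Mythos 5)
Your proposal follows essentially the same route as the paper's: the paper proves this theorem simply by deferring to the proof of Theorem \ref{TcSVDTh}, which transforms along modes $2,\dots,N-1$, takes slice-wise matrix SVDs, stacks the factors, and inverts the transform --- exactly your construction with the DFT in place of the cosine transform. Your added conjugate-symmetry bookkeeping to guarantee \emph{real} factors is actually more careful than the paper's deferred argument, which does not address the fact that the DFT (unlike the DCT) produces complex slices.
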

\begin{proof}
  The approach for proof is similar to theorem \ref{TcSVDTh} that we will prove in the following.
\end{proof}

\section{Novel tensor product and decomposition based on block convolution with reflective boundary conditions}\label{sec4}
The previous section has shown that the t-Product is a type of block convolution that utilizes periodic boundary conditions between two tensors. The concept of convolution has a rich history in the field of signal and image processing, and has been applied in various contexts such as image enhancement, image restoration, and deep convolutional neural networks (CNNs). For example, in image enhancement, a mask is convolved with an image for denoising or edge detection purposes, while in image restoration, the blurring process can be modeled as the convolution of the blurring mask and the input image \cite{zboundary}.

Although convolution masks must operate on the image boundary, it's not always possible to access the actual boundaries in all of these applications. As a result, various artificial boundary conditions (BCs) have been proposed and utilized in multiple scenarios. Examples of well-known BCs in the literature of image restoration include zero, periodic, reflective, and anti-reflective BCs. Experimental results have shown that reflective BCs generally outperform periodic and zero BCs due to their ability to model more complex scenarios that are more compatible with real situations \cite{rboundary}.

This section introduces a new type of tensor-tensor product based on block convolution between two tensors with reflective boundary conditions. The linear operator that corresponds to convolution by reflective BC can be diagonalized with a Cosine operator, which always produces real numbers unlike the FFT. Moreover, the complexity of this transformation is less than that of the FFT.

To extend this concept, we first describe convolution with reflective BCs and then introduce block matrix and block tensor versions of this operation, following a similar approach as in the previous section.

In convolution equation \eqref{convolutionVector} Let $ \boldsymbol{x}=[x_1,\dots,x_n]^{\mathsf{T}} $  and $ \boldsymbol{h}=[h_{1-n},\dots,h_1,\dots,h_{n-1}]^{\mathsf{T}}=[a_n ,\dots,a_1,\dots,a_n]^{\mathsf{T}} \in R^{2n-1} $ be the input signal and the nonzero part of a symmetric kernel $ \boldsymbol{h} $. Also consider reflective BC in outside the real signal $ \boldsymbol{x} $ , i.e.
\begin{align*}
    x_i=
    \begin{cases}
        x_{1-i}    & i=0,\cdots,1-n  \\
        x_{2n-i+1} & i=n+1,\cdots,2n
    \end{cases}
\end{align*}
By these assumptions the convolution \eqref{convolutionVector}, becomes
\begin{align}\label{eqThx}
    y_{i} = \sum_{j=1}^{n} \left( a_{|i-j|+1}+(1-\delta_{i+j,n+ 1})
    \begin{cases}
        a_{i+j}          & i+j\leq n \\
        a_{2(n+1)-(i+j)} & i+j> n
    \end{cases}
    \right) x_{j}, \quad i=1,\cdots,n.
\end{align}
where
\begin{align*}
    \delta_{i,j}=
    \begin{cases}
        1   & i=j  \\
        0 & i\neq j
    \end{cases}
\end{align*}
By defining $ \mathsf{Th}(\boldsymbol{a}) \in \mathbb{R}^{n \times n} $ matrix as following:
\begin{align}\label{eqTh}
    \mathsf{Th}(\boldsymbol{a})[i,j] = a_{|i-j|+1}+(1-\delta_{i+j,n+ 1})
    \begin{cases}
        a_{i+j}          & i+j\leq n \\
        a_{2(n+1)-(i+j)} & i+j> n
    \end{cases},\quad i,j=1,\cdots,n
\end{align}
the equation \eqref{eqThx} becomes to the following linear equation:
\begin{align}\label{rezth1}
    \boldsymbol{y} = \mathsf{Th}(\boldsymbol{a})\boldsymbol{x},
\end{align}
$\mathsf{Th}(\boldsymbol{a})$ has the following form in detail:
\begin{align}\label{eq_Th}
    \mathsf{Th}(\boldsymbol{a}) & =
    \begin{bmatrix}
        a_{1}   & a_{2}   & \cdots & a_{n-1} & a_{n}   \\
        a_{2}   & a_{1}   & \cdots & a_{n-2} & a_{n-1} \\
        \vdots  & \ddots  & \ddots & \ddots  & \vdots  \\
        a_{n-1} & a_{n-2} & \cdots & a_{1}   & a_{2}   \\
        a_{n}   & a_{n-1} & \cdots & a_{2}   & a_{1}   \\
    \end{bmatrix}+
    \begin{bmatrix}
        a_{2}  & a_{3}  & \cdots & a_{n}  & 0      \\
        a_{3}  & \adots & \adots & \adots & a_{n}  \\
        \vdots & \adots & 0      & \adots & \vdots \\
        a_{n}  & \adots & \adots & \adots & a_{3}  \\
        0      & a_{n}  & \cdots & a_{3}  & a_{2}  \\
    \end{bmatrix}                                                 \\
                                & =\mathsf{Toep}(\boldsymbol{a})+\mathsf{Hank}(\boldsymbol{a})
\end{align}
which shows that the $ \mathsf{Th}(\boldsymbol{a}) $ is a spacial kind of Toeplitz-plus-Hankel matrix,\cite{rboundary}.

\begin{figure}[ht]
    \centering
    \begin{subfigure}[t]{0.3\textwidth}
        \centering
        \includegraphics[width=0.95\textwidth]{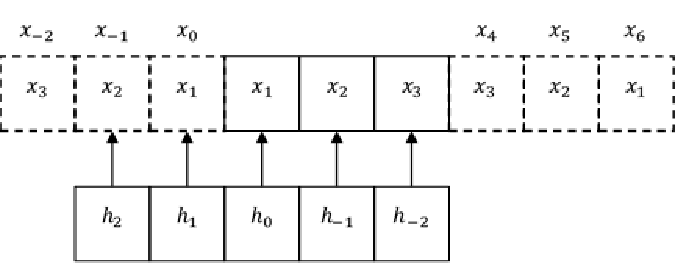}
        \caption{$ y_1 $}
    \end{subfigure}
    \begin{subfigure}[t]{0.3\textwidth}
        \centering
        \includegraphics[width=0.95\textwidth]{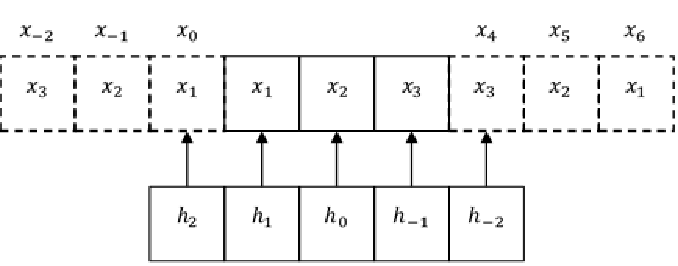}
        \caption{$ y_2 $}
    \end{subfigure}
    \begin{subfigure}[t]{0.3\textwidth}
        \centering
        \includegraphics[width=0.95\textwidth]{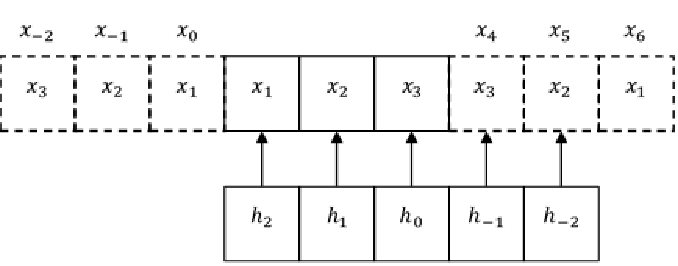}
        \caption{$ y_3 $}
    \end{subfigure}

    \caption{
        Convolution with with reflective boundary}\label{fig3}
\end{figure}

For example,
figure \ref{fig3}  demonstrates this BC schematically for $ n=3 $.
In \cite{rboundary} it has been shown that for any $ \boldsymbol{a} \in \mathbb{R}^{n} $, Toeplitz-plus-Hankel matrix in the $ \mathsf{Th}(\boldsymbol{a}) $ form, can be diagonalized by Cosine transform as{:}
\begin{align}\label{cosineTransformDiagonalize}
    \mathsf{Th}(\boldsymbol{a}) = {\overline{C}}^{\mathsf{T}}\boldsymbol{\Lambda}\overline{C},
    \quad
    \boldsymbol{\Lambda}=\mathsf{diag}(\bar{\boldsymbol{a}}),
    \quad
    \bar{\boldsymbol{a}}= \mathsf{diag}(c_1)~\overline{C}~\mathsf{Th}_1(\boldsymbol{a}),
    \qquad c_1 = 1 ./ {\overline{C}}[{:},1]
\end{align}
where $ {\overline{C}} \in \mathbb{R}^{n \times n} $ is discrete cosine matrix defined as
\begin{align*}
    {\overline{C}}\left[ i,j \right] = \sqrt{\dfrac{2 - \delta_{i1}}{n}}\cos\left( \dfrac{(i-1)(2j-1)}{2n}\pi \right), ~i,j = 1,\cdots,n
    ,\qquad
    \delta_{ij} = \begin{cases}
                      1,\quad i=j \\
                      0,\quad \mathrm{otherwise}
                  \end{cases},
\end{align*}
and
\begin{align*}
    \mathsf{Th}_1(\boldsymbol{a})= \begin{bmatrix}
                                       {{a_1} + {a_2}}       \\
                                       {{a_2} + {a_3}}       \\
                                       \vdots                \\
                                       {{a_{n - 1}} + {a_n}} \\
                                       a_n
                                   \end{bmatrix}= \boldsymbol{\Gamma} \boldsymbol{a}
    ,\qquad
    \boldsymbol{\Gamma} =\begin{bmatrix}
                             1      & 1      & 0      & \cdots & 0      \\
                             0      & 1      & 1      & \ddots & \vdots \\
                             0      & \ddots & \ddots & \ddots & 0      \\
                             \vdots & \ddots & 0      & 1      & 1      \\
                             0      & \cdots & 0      & 0      & 1
                         \end{bmatrix},
    ~ \boldsymbol{a}= \begin{bmatrix}
                          {{a_1}}       \\
                          {{a_2}}       \\
                          \vdots        \\
                          {{a_{n - 1}}} \\
                          a_n
                      \end{bmatrix},
\end{align*}
is the first column of $ \mathsf{Th}(\boldsymbol{a}) $.
Since
$
    \mathsf{diag}(c_1)~\boldsymbol{\Lambda}~{\mathsf{diag}(c_1)}^{-1} = \boldsymbol{\Lambda}
$,
diagonalization form \eqref{cosineTransformDiagonalize} could be rewritten as
\begin{align}\label{cosine4}
    \mathsf{Th}(\boldsymbol{a}) = {{C}}^{\raisebox{1pt}{$-1$}} \boldsymbol{\Lambda} {C}, \quad {C} =\mathsf{diag}(c_1)~\overline{C}, \quad \boldsymbol{\Lambda}=\mathsf{diag}(\bar{\boldsymbol{a}}), \quad \bar{\boldsymbol{a}}= C~\Gamma a.
\end{align}
where $C^{-1}=\overline{C}^{\sf T}\mathsf{diag}(1./c_1)$.
By this decomposition its clear that \eqref{rezth1} could be computed by fast cosine transform as follows
\begin{align}\label{rezth2}
    \boldsymbol{y} & =\mathsf{Th}(\boldsymbol{a})\boldsymbol{x}  ={C}^{\raisebox{1pt}{$-1$}}(\bar{\boldsymbol{a}}\odot\bar{\boldsymbol{x}}), \quad \bar{\boldsymbol{a}}={C}\Gamma\boldsymbol{a}, \quad \bar{\boldsymbol{x}}={C}\boldsymbol{x}
\end{align}
where $\odot$ denotes Hadamard(pointwise) product of vectors.
The equation \eqref{rezth2} is our base for defining new tensor product of tensors. But its clear that if we define
$\boldsymbol{y}=\boldsymbol{a}\star_c \boldsymbol{x}=\mathsf{Th}(\boldsymbol{a})\boldsymbol{x}$
obviously, this multiplication does not have the associative property, i.e.,
\[
    \boldsymbol{b}\star_c(\boldsymbol{a}\star_c \boldsymbol{\boldsymbol{x}})\neq (\boldsymbol{b}\star_c\boldsymbol{a})\star_c\boldsymbol{\boldsymbol{x}},
\]
which is not suitable for defining a new product operator. So we define
\[
    \boldsymbol{y}{:=}\boldsymbol{a}\star_c \boldsymbol{x}=\mathsf{Th}(\Gamma^{-1}\boldsymbol{a})\boldsymbol{x}
\]
clearly this product which is convolution (with reflective BC) between  $\widehat{\boldsymbol{a}}=\Gamma^{-1}\boldsymbol{a}$ and $x$ has associative property.

Similar to convolution with periodic BC in section \ref{sec3}, if we substitute the
elements $a_i, x_i$ and $y_i$ of convolution with reflective BC \eqref{eqThx}, with the  matrices $\widehat{\mathcal{A}}[:,i,:]=\widehat{A}_{i} \in \mathbb{R}^{n_1\times n_3}$, $\mathcal{X}[:,i,:]=X_i \in \mathbb{R}^{n_3\times n_4}$, and $\mathcal{Y}[:,i,:]= Y_i \in \mathbb{R}^{n_1\times n_4}$  for $i=1,\cdots, n_2$,
where
$\widehat{\mathcal{A}}=(\Gamma^{-1})_{2}.\mathcal{A},$ the mode-2 block reflective padding of $\mathcal{X}$ will be
\begin{align}\label{BBCR}
    X_i=\begin{cases}
            X_{1-i}      & i=0,\cdots,1-n_2    \\
            X_{2n_2-i+1} & i=n_2+1,\cdots,2n_2
        \end{cases}
\end{align}
and \eqref{eqTh} for $i=1,\cdots,n_2$ becomes
\begin{align}\label{M}
    \mathcal{Y}[:,i,:] = \sum_{j=1}^{n_2} \left( \widehat{\boldsymbol{A}}_{|i-j|+1}+(1-\delta_{i+j,n_2+ 1})
    \begin{cases}
        \widehat{\boldsymbol{A}}_{i+j}            & i+j\leq n_2 \\
        \widehat{\boldsymbol{A}}_{2(n_2+1)-(i+j)} & i+j> n_2
    \end{cases}
    \right) X[:,i,:].
\end{align}
So, this is the bock convolution with reflective-padding(BC) in mode-2  of tensors $\widehat{\mathcal{A}}\in \mathbb{R}^{n_1\times n_2\times n_3}$ and $\mathcal{X}\in \mathbb{R}^{n_3\times n_2\times n_4}$  which gives $\mathcal{Y}\in \mathbb{R}^{n_1\times n_2 \times n_4}$.
By defining the 4-order tensor $ \mathsf{TH}(\mathcal{A}) $ with elements
\begin{align}\label{eq_THA_e}
    \mathsf{TH}(\widehat{\mathcal{A}})[:,i_{2},:,i_{4}] = \widehat{\mathcal{A}}[:,|i_{2} - i_{4}|+1,:] + \left(1-\delta_{i_{2}+i_{4},n_2+1}\right)
   \begin{cases}
        \widehat{\mathcal{A}}[:, i_{2} + i_{4},:]              & i_{2}+i_{4} \leq n_2 \\
        \widehat{\mathcal{A}}[:, 2(n_2+1) - (i_{2} + i_{4}),:] & i_{2} + i_{4} > n_2
    \end{cases}.
\end{align}
equation \eqref{M} becomes
\begin{align*}
    \mathcal{Y}[:,i_{2},:] = \sum_{i_{4} = 1}^{n} \mathsf{TH}(\widehat{\mathcal{A}})[:,i_{2},: i_{4}]\mathcal{X}[:,i_{4},:],\quad i_2=1,\cdots,n_2
\end{align*}
which is equal to the following contraction:
\begin{align}\label{eq_product}
    \mathcal{Y} = \langle \mathsf{TH}(\widehat{\mathcal{A}}), \mathcal{X} \rangle_{3:4;1:2}.
\end{align}
\begin{defi}[$\star_c{}\text{-Product}$]
    We define $\star_c$ operation between
    $\mathcal{A}\in\mathbb{R}^{I_1\times I_2\times J}$ and $\mathcal{X}\in\mathbb{R}^{J\times I_2\times I_3}$ tensors, as
    \begin{align}\label{tcProduct}
        \mathcal{Y}=	\mathcal{A}\star_c\mathcal{X}=\langle \mathsf{TH}(\widehat{\mathcal{A}}),{\mathcal{X}}\rangle_{3{:}4,1{:}2}, \quad \widehat{A}=(\Gamma)_2.\mathcal{A}.
    \end{align}
\end{defi}
In the following we show that this $\star_c$ product can be done very fast by Fast cosine trasform on mode-2 of tensors $\mathcal{A}$ and $\mathcal{X}$, without constraucting tensors $\widehat{A}$ and $\mathsf{TH}(\widehat{\mathcal{A}})$.

From \eqref{eq_THA_e} it's clear that $ \mathsf{TH}(\widehat{\mathcal{A}})[i_{1},:,i_{3},:] $ is a Toeplitz plus Hankel matrix, in the form \eqref{eq_Th}, constructed by
$\widehat{\mathcal{A}}[i_1,:,i_3]$ vector. so
\begin{align}\label{eq_THA}
    \mathsf{TH}(\widehat{\mathcal{A}})[i_{1},:,i_{3},:] = \mathsf{Th}(\widehat{\mathcal{A}}[i_{1},:,i_{3}])
\end{align}
where $\mathsf{Th}(\mathcal{A}[i_{1},:,i_{3}])$ has the structure like \eqref{cosineTransformDiagonalize}.
Similar to definition of circulant and Toeplitz tensor in \cite{pboundary}, we say that $ \mathsf{TH}(\widehat{\mathcal{A}}) $ is an $ \{2,4\} $-Toeplitz-plus-Hankel tensor(and for simplicity in abbreviation say $ \mathsf{TH}(\widehat{\mathcal{A}}) $  is $\{2,4\}$-{\sf TH} tensor).
\begin{lemma}\label{lemm_diagonal}
    The $\{2,4\}-${\sf TH} tensor $\mathsf{TH}(\widehat{\mathcal{A}})$, where $\widehat{\mathcal{A}}=(\Gamma^{-1})_{2}.\mathcal{A}$ could be decomposed as follows:
    \begin{align}\label{eq_IR}
        \mathsf{TH}(\widehat{\mathcal{A}})=({C^{-1}},{C^{\sf T}})_{2,4}.\mathcal{D},
    \end{align}
    where $\mathcal{D}={\sf diag}_{\{2,4\}}(\overline{\mathcal{A}})$ is a
    $\{\{2,4\}\}$-diagonal tensor with elements,
    \begin{align}\label{eq_IIR}
        \mathcal{D}[i_1,i_2,i_3,i_4]=\delta_{i_2,i_2}{\overline{\mathcal{A}}}[i_1,i_2,i_3],\quad  \overline{\mathcal{A}}=(\overline{C})_2.\mathcal{A}
    \end{align}
\end{lemma}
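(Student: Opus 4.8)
The plan is to reduce the tensor identity \eqref{eq_IR} to the scalar (matrix) diagonalization \eqref{cosine4}, applied slice by slice. The bridge is the structural observation \eqref{eq_THA}: for every fixed pair $(i_1,i_3)$, the $(2,4)$-slice $\mathsf{TH}(\widehat{\mathcal{A}})[i_1,:,i_3,:]$ is exactly the Toeplitz-plus-Hankel matrix $\mathsf{Th}(\widehat{\mathcal{A}}[i_1,:,i_3])$ generated by the fiber $\widehat{\mathcal{A}}[i_1,:,i_3]=\Gamma^{-1}\mathcal{A}[i_1,:,i_3]$. Thus the whole tensor is a stack of such matrices indexed by $(i_1,i_3)$, and it suffices to prove the claimed factorization on each slice and then reassemble.

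First I would unwind the mode products on the right-hand side of \eqref{eq_IR}. Using the convention $\mathcal{G}=(M)_n.\mathcal{Z}$ with $\mathcal{G}_{\dots j\dots}=\sum_{i_n}M_{j,i_n}\mathcal{Z}_{\dots i_n\dots}$, acting with $C^{-1}$ in mode $2$ and with $C^{\sf T}$ in mode $4$ leaves modes $1$ and $3$ untouched and therefore acts independently on each $(i_1,i_3)$-slice. A direct index computation shows the mode-$2$ factor becomes left multiplication by $C^{-1}$ and the mode-$4$ factor becomes right multiplication by $C$, so that \eqref{eq_IR} is equivalent to the family of matrix equations $\mathsf{TH}(\widehat{\mathcal{A}})[i_1,:,i_3,:]=C^{-1}\,\mathcal{D}[i_1,:,i_3,:]\,C$ for all $(i_1,i_3)$. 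Because $\mathcal{D}$ is $\{2,4\}$-diagonal by \eqref{eq_IIR}, the middle slice $\mathcal{D}[i_1,:,i_3,:]$ is the diagonal matrix $\mathsf{diag}(\overline{\mathcal{A}}[i_1,:,i_3])$.

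Next I would invoke the scalar result. Writing $\boldsymbol{a}=\mathcal{A}[i_1,:,i_3]$ and applying \eqref{cosine4} to the vector $\Gamma^{-1}\boldsymbol{a}$, the Toeplitz-plus-Hankel eigendecomposition gives $\mathsf{Th}(\Gamma^{-1}\boldsymbol{a})=C^{-1}\mathsf{diag}(\overline{\boldsymbol{a}})C$ with $\overline{\boldsymbol{a}}=C\Gamma(\Gamma^{-1}\boldsymbol{a})=C\boldsymbol{a}$; the $\Gamma$ and $\Gamma^{-1}$ cancel, which is exactly why the hat transform $\widehat{\mathcal{A}}=(\Gamma^{-1})_2.\mathcal{A}$ was inserted in the first place. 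Comparing with the slice equation from the previous step, the two sides coincide precisely when the eigenvalue fiber is identified as $\overline{\mathcal{A}}[i_1,:,i_3]=C\boldsymbol{a}$, i.e. when $\overline{\mathcal{A}}$ is read as the mode-$2$ cosine transform $(C)_2.\mathcal{A}$ in \eqref{eq_IIR}. Collecting the slices over all $(i_1,i_3)$ then reassembles the tensor identity \eqref{eq_IR}.

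The only delicate points are bookkeeping rather than conceptual, and this is where I would concentrate care. I would track the mode-product orientation so that the inverse cosine transform lands on the left (mode $2$) and the forward transform on the right (mode $4$), and I would keep the normalization $C=\mathsf{diag}(c_1)\overline{C}$ with $C^{-1}=\overline{C}^{\sf T}\mathsf{diag}(1./c_1)$ explicit when passing between the unnormalized matrix $\overline{C}$ of \eqref{cosineTransformDiagonalize} and the normalized $C$ of \eqref{cosine4}; this is the step where the $\mathsf{diag}(c_1)$ factor is absorbed and the eigenvalues are correctly expressed as a cosine transform of $\mathcal{A}$. No genuine estimate or approximation is required: the entire content of the lemma is the slice-wise lift of \eqref{cosine4}, and the structural fact \eqref{eq_THA} is what makes that lift legitimate.
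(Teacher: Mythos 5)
Your proposal is correct and follows essentially the same route as the paper's own proof: reduce via \eqref{eq_THA} to the slice-wise Toeplitz-plus-Hankel diagonalization \eqref{cosine4}, observe that $C\Gamma(\Gamma^{-1}\boldsymbol{a})=C\boldsymbol{a}$ so the $\Gamma$ factors cancel, and reassemble the diagonal matrices into the $\{2,4\}$-diagonal tensor $\mathcal{D}$. Your remark that the eigenvalue fiber must be read as $(C)_2.\mathcal{A}$ with the normalized $C=\mathsf{diag}(c_1)\overline{C}$ (rather than the $(\overline{C})_2.\mathcal{A}$ written in \eqref{eq_IIR}) matches what the paper's proof actually uses in its own computation.
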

\begin{proof}
    From \eqref{eq_THA} $\mathsf{TH}(\widehat{\mathcal{A}})[i_{1},:,i_{3},:] = \mathsf{Th}(\widehat{\mathcal{A}}[i_{1},:,i_{3}])$
    is a Toeplitz-plus-Hankel matrix and so by decomposition \eqref{cosine4} and $\widehat{\mathcal{A}}=(\Gamma^{-1})_{2}.{\mathcal{A}}$, we have
    \begin{align}\label{eq_2}
        \mathsf{TH}(\widehat{\mathcal{A}})[i_{1},:,i_{3},:] ={C}^{\raisebox{1pt}{$-1$}}\Lambda^{(i_1,i_3)}{C}=\left(C^{-1},C^{\sf T}\right)_{1,2}.\Lambda^{(i_1,i_3)}
    \end{align}
    where
    \begin{align}\label{eq:manl}
        \Lambda^{(i_1,i_3)}={\sf diag}({C}~\Gamma~ \widehat{\mathcal{A}}[i_1,:,i_3])= {\sf diag}({C}~\mathcal{A}[i_1,:,i_3])= {\sf diag}(\overline{\mathcal{A}}[i_1,:,i_3]).
    \end{align}
    Here $\Lambda^{(i_1,i_3)}(i_2,j_4)=\delta_{i_2i_4}\overline{\mathcal{A}}[i_1,i_2,i_3]$.
    Therefore, the tensor $\mathcal{D}$ defined as $   \mathcal{D}[i_1,{:},i_3,{:}]=\Lambda^{(i_1,i_3)}$
    satisfies
    \[
        \mathcal{D}[i_1,i_2,i_3,i_4]=\Lambda^{(i_1,i_3)}(i_2,i_4)=\delta_{i_2i_4}\overline{\mathcal{A}}[i_1,i_2,i_3],
    \]
    which proves \eqref{eq_IIR}.
    Also, from  definition of $\mathcal{D}$, the equation \eqref{eq_2} will be
    \begin{align}
        \mathsf{TH}(\widehat{\mathcal{A}})[i_{1},:,i_{3},:] =({C}^{\raisebox{1pt}{$-1$}},{C}^{\raisebox{1pt}{${\sf T}$}})_{2,4}.\mathcal{D}(i_1,:,i_3,:)
    \end{align}
    which proves \eqref{eq_IR}.
\end{proof}

Now,  the lemma \eqref{lemm_diagonal}, helps to develop a fast methods to compute $\star_c$ product
by cosine transform on mode-2 of tensors $\mathcal{A}$ and $\mathcal{X}$, without constructing tensors $\widehat{A}$ and $\mathsf{TH}(\widehat{\mathcal{A}})$ as following lemma.
\begin{lemma}
    The $\star_c$ product $
        \mathcal{Y}=	\mathcal{A}\star_c\mathcal{X}=\langle \mathsf{TH}(\widehat{\mathcal{A}}),{\mathcal{X}}\rangle_{3{:}4,1{:}2},
    $ in equation \eqref{tcProduct} could be computed by cosine transform as follows:
    \begin{align*}
        \mathcal{Y}=({C}^{\raisebox{1pt}{$-1$}})_2.\overline{\mathcal{Y}}
    \end{align*}
    where $\overline{\mathcal{Y}}[{:},i_2,{:}]=\overline{\mathcal{A}}[{:},i_2,{:}]~\overline{\mathcal{X}}[{:},i_2,{:}], \quad i_2=1,\cdots,n_2$ and
    \begin{align*}
        \overline{\mathcal{X}}=({C})_2.\mathcal{X},\quad \overline{\mathcal{A}}=({C})_2.\mathcal{A}
    \end{align*}
\end{lemma}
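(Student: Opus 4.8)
The plan is to reduce the contraction that defines the $\star_c$ product to a mode-2 (slice-wise) computation by substituting the diagonalization of $\mathsf{TH}(\widehat{\mathcal{A}})$ furnished by Lemma~\ref{lemm_diagonal} and then regrouping the summations so that the Cosine operator acts on mode~2 of $\mathcal{A}$ and $\mathcal{X}$ separately. Everything becomes an exercise in index bookkeeping once the diagonal structure is inserted; there is no analytic difficulty.

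First I would unfold the contraction \eqref{tcProduct} into indices. Writing $\mathcal{Y}\in\mathbb{R}^{n_1\times n_2\times n_4}$ and summing the contracted modes $3{:}4$ of $\mathsf{TH}(\widehat{\mathcal{A}})$ against modes $1{:}2$ of $\mathcal{X}$,
\[
\mathcal{Y}[i_1,i_2,k]=\sum_{i_3=1}^{n_3}\sum_{i_4=1}^{n_2}\mathsf{TH}(\widehat{\mathcal{A}})[i_1,i_2,i_3,i_4]\,\mathcal{X}[i_3,i_4,k].
\]

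Next I would insert Lemma~\ref{lemm_diagonal}, i.e.\ \eqref{eq_IR}, in index form, $\mathsf{TH}(\widehat{\mathcal{A}})[i_1,i_2,i_3,i_4]=\sum_{p,q}C^{-1}[i_2,p]\,C^{\sf T}[i_4,q]\,\mathcal{D}[i_1,p,i_3,q]$, and use the $\{2,4\}$-diagonality of $\mathcal{D}$ from \eqref{eq_IIR}, namely $\mathcal{D}[i_1,p,i_3,q]=\delta_{p,q}\,\overline{\mathcal{A}}[i_1,p,i_3]$, to collapse the $q$-sum. With $C^{\sf T}[i_4,p]=C[p,i_4]$ this yields $\mathsf{TH}(\widehat{\mathcal{A}})[i_1,i_2,i_3,i_4]=\sum_p C^{-1}[i_2,p]\,C[p,i_4]\,\overline{\mathcal{A}}[i_1,p,i_3]$, where $\overline{\mathcal{A}}=(C)_2.\mathcal{A}$ as in the statement (this is exactly the identification made in \eqref{eq:manl} of the proof of Lemma~\ref{lemm_diagonal}).

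The final step is to substitute this back and reorder the four summations. Pulling $C^{-1}[i_2,p]$ outside and performing the $i_4$-sum first recognizes $\sum_{i_4}C[p,i_4]\mathcal{X}[i_3,i_4,k]=\overline{\mathcal{X}}[i_3,p,k]$ with $\overline{\mathcal{X}}=(C)_2.\mathcal{X}$; the remaining $i_3$-sum is precisely the $(i_1,k)$ entry of the frontal-slice product $\overline{\mathcal{A}}[:,p,:]\,\overline{\mathcal{X}}[:,p,:]$, which equals $\overline{\mathcal{Y}}[i_1,p,k]$; and the outstanding $\sum_p C^{-1}[i_2,p]\,\overline{\mathcal{Y}}[i_1,p,k]$ is by definition $(C^{-1})_2.\overline{\mathcal{Y}}$. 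This gives $\mathcal{Y}=(C^{-1})_2.\overline{\mathcal{Y}}$, as claimed. The only point demanding care---the main ``obstacle''---is tracking which modes are contracted and transposing $C$ correctly when the Cosine operator is moved from mode~4 of $\mathsf{TH}(\widehat{\mathcal{A}})$ onto mode~2 of $\mathcal{X}$; the $\{2,4\}$-diagonality of $\mathcal{D}$ is exactly what makes the two cosine factors decouple into independent transforms of $\mathcal{A}$ and $\mathcal{X}$.
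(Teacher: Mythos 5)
Your proposal is correct and follows essentially the same route as the paper's proof: substitute the diagonalization $\mathsf{TH}(\widehat{\mathcal{A}})=({C}^{-1},{C}^{\sf T})_{2,4}.\mathcal{D}$ from Lemma~\ref{lemm_diagonal} into the contraction, move the cosine factors onto mode~2 of $\mathcal{Y}$ and $\mathcal{X}$, and use the $\{2,4\}$-diagonality of $\mathcal{D}$ to reduce to slice-wise matrix products. The only difference is that you verify in explicit index form the regrouping step that the paper asserts follows immediately from the definitions of contraction and mode-$n$ product, which is a fair and correct elaboration.
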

\begin{proof}
    Proof: From Lemma \ref{lemm_diagonal},
    $\mathsf{TH}(\widehat{\mathcal{A}})=({C}^{\raisebox{1pt}{$-1$}},{C}^{\raisebox{1pt}{${\sf T}$}})_{2,4}.\mathcal{D}$,
    where $\mathcal{D}=\text{diag}_{\{2,4\}}(\overline{\mathcal{A}})$, and $\overline{\mathcal{A}}=(C)_2.\mathcal{A}$.
    So
    \begin{align}\label{RezL}
        \nonumber
        \mathcal{Y} & =\langle \mathsf{TH}(\widehat{\mathcal{A}}),\mathcal{X}\rangle_{3{:}4,1{:}2}                                            \\
        \nonumber
                    & =\langle({C}^{\raisebox{1pt}{$-1$}},{C}^{\raisebox{1pt}{${\sf T}$}})_{2,4}.\mathcal{D},\mathcal{X}\rangle_{3{:}4,1{:}2} \\
                    & =({C}^{\raisebox{1pt}{$-1$}})_2.\left[\langle\mathcal{D},({C})_{2}.\mathcal{X}\rangle_{3{:}4,1{:}2}\right]
    \end{align}
    which the last equation comes immediately follow from the definitions of contraction and matrix-tensor product.
    If $\overline{\mathcal{X}}=({C})_2.\mathcal{X}$ and $\overline{\mathcal{Y}}=({C})_2.\mathcal{Y}$
    equation \eqref{RezL} becomes
    \begin{align}\label{RezYb}
        \overline{\mathcal{Y}}=\langle\mathcal{D},\overline{\mathcal{X}}\rangle_{3{:}4,1{:}2}.
    \end{align}
    Since $\mathcal{D}=\text{diag}_{\{2,4\}}(\overline{\mathcal{A}})$ is $\{2,4\}-$diagonal, \eqref{RezYb} gives,
    \begin{align*}
        \overline{\mathcal{Y}}[{:},i_2,{:}]=\overline{\mathcal{A}}[{:},i_2,{:}]~\overline{\mathcal{X}}[{:},i_2,{:}],
    \end{align*}
    which proves the Lemma.
\end{proof}
Algorithm \ref{Alg1} displays how to compute $\star_c{}\text{-Product}$ between two 3D tensors.

\begin{algorithm}[ht]
    \caption{$\star_c{}\text{-Product}$ for 3-order tensors}\label{Alg1}
    \begin{algorithmic}[1]
        \Require
        $ \mathcal{A}\in \mathbb{R}^{n_1\times n_2\times \ell} $, $ \mathcal{X}\in \mathbb{R}^{\ell\times n_2\times n_3} $
        \Ensure
        $\mathcal{Y}$
        \State Compute $ \overline{\mathcal{A}} = (\boldsymbol{C})_2 . \mathcal{A} $
        and $ \overline{\mathcal{X}} = (\boldsymbol{C})_2 . \mathcal{X} $
        \State Compute each mode-2 slice of $ \overline{\mathcal{Y}} $ by $ \overline{\boldsymbol{Y}}_i = \overline{\boldsymbol{A}}_i \overline{X}_i $, $i=1,\cdots,n_2$
        \State Compute $ \mathcal{Y} = (\boldsymbol{C}^{-1})_2 . \overline{\mathcal{Y}} $
    \end{algorithmic}
\end{algorithm}

\subsection{$\star_c$-Product Generalization for arbitrary-order tensors}
In this section we would like to extend the $\star_c$ product for for arbitrary order tensors  $ \mathcal{A} \in \mathbb{R}^{I_1 \times \dots \times I_{N-1} \times J} $ and $ \mathcal{X} \in \mathbb{R}^{J \times I_2 \times \dots \times I_N} $.
In the last section for 3-order, we defined $\star_c$ product of two tensors $\mathcal{A}\in$ and $\mathcal{X}\in$  as the Block convolution  with reflective  padding of mode-2 of $\widehat{A}$ and $X$, which leads to contraction \eqref{eq_product}.
Now, if we substitute the block components $\widehat{A}_i, X_i$ of Block convolution \eqref{M} with boundary \eqref{BBCR} by the slices $\widehat{\mathcal{A}}[:,i_2,\cdots,i_{N-1},:]$ and
$\mathcal{X}[:,i_2,\cdots,i_{N-1},:]$ of tensors
\[\widehat{\mathcal{A}}=
    (\Gamma,\cdots,\Gamma)_{2,\cdots,N-1}.{\mathcal{A}}
    \in \mathbb{R}^{I_{1} \times I_{2} \times \cdots \times I_{N-1} \times J}, \quad \mathcal{X} \in \mathbb{R}^{J \times I_{2} \times \cdots \times I_{N}}, \]
we have $\{2,\cdots,N-1\}-$reflective padding as follows:
\begin{align}
    \mathcal{X}[:, i_{2}, i_{3}, \cdots, i_{N-1}, :] = \mathcal{X}[:, s_{2}, s_{3}, \cdots, s_{N-1},:],\quad
    s_{j} =
    \begin{cases}
        1-i_j,        & i_{j} = 0, \cdots, 1- I_{j}     \\
        2I_{j}-i_j+1, & i_{j} = I_{j}+1, \cdots, 2I_{j}
    \end{cases}.
\end{align}
and the convolution \eqref{M}  will be
\begin{align}\label{BBM}
    \mathcal{Y}[:,\bar{i},:] = \sum_{j_2,\cdots,j_{N-1}}^{I_2,\cdots,I_{N-1}} \left (
    \widehat{\mathcal{A}}[:,{|\bar{i}-\bar{j}|}+1,:]+
    \prod_{k=2}^{N-1}(1-\delta_{i_k+j_k,n_k+1})\widehat{\mathcal{A}}[\bar{:,\ell,:}]
    \right)\mathcal{X}[:,\bar{j},:],
\end{align}
where  $ \bar{i} =  i_{2}, \cdots, i_{N-1}  $, $ \bar{j} = j_{2}, \cdots, j_{N-1} $ and
\begin{align}
    {|\bar{i}-\bar{j}|+1} & =|i_2-j_2|+1,|i_3-j_3|+1,\ldots,|i_{N-1}-j_{N-1}|+1 \\
    \bar{\ell}            & =\bar{\ell}_2,\ldots,\bar{\ell}_{N-1},\qquad
    \bar{\ell}_k =\begin{cases}
                      i_k+j_k        & i_k+j_k\leq I_k \\
                      2n_k-(i_k+j_k) & i_k+j_k>I_k
                  \end{cases}
\end{align}
Now by defining $ \mathsf{TH}(\widehat{\mathcal{A}}) $ as
\begin{align}\label{extendTHA}
    \mathsf{TH}(\widehat{\mathcal{A}})[{:},i_2,i_3,\ldots,i_{N-1},{:},j_2,j_3,\ldots,j_{N-1}]=\widehat{\mathcal{A}}[:,{|\bar{i}-\bar{j}|}+1,:]+\prod_{k=2}^{N-1}(1-\delta_{i_k+j_k,n_k+1})\widehat{\mathcal{A}}[:,\bar{\ell},:]
\end{align}
equation \eqref{M} leads to the following contraction product
\begin{align}
    \mathcal{Y}= \langle \mathsf{TH}(\widehat{\mathcal{A}}), \mathcal{X} \rangle_{N:2(N-1);1:N-1}.
\end{align}
\begin{defi}
    We define $\star_c$ operation between
    $\mathcal{A}\in\mathbb{R}^{I_1\times I_2\cdots \times I_{N-1}\times J}$ and $\mathcal{X}\in\mathbb{R}^{J\times I_2 \cdots \times \times I_N}$ tensors, as
    \begin{align}
        \mathcal{Y}= \mathcal{A}\star_c\mathcal{X}=\langle  \mathsf{TH}(\widehat{\mathcal{A}}), \mathcal{X} \rangle_{N:2(N-1);1:N-1}.
    \end{align}
\end{defi}
Here  $ \mathsf{TH}(\widehat{\mathcal{A}})\in \mathbb{R}^{I_1\times I_2\times \cdots\times I_N\times I_2\times I_N-1} $ is a $\{2{:}N-1;N+1{:}2(N-1)\}$-TH tensor and in the following we show that this tensor is diagonaizable by cosine transform in $\{2,\dots,N{-}1\}$-modes, which cause to find $x_c$ product very fast without construction of  $ \mathsf{TH}(\widehat{\mathcal{A}}) $ only by applying fast cosine transform in $\{2,\dots,N{-}1\}$-modes of tensors $\mathcal{A}$ and $\mathcal{X}$. To show this property we need an extension of (Hadamard) point-wise product for tensors in some modes as follows:

\begin{defi}
    We define the the $\{2,\cdots,N-1\}-$Hadamard (point-wise) product of order-N tensors  $\mathcal{S}\in \mathbb{R}^{I_1\times I_2\times I_{N-1}\times I_N}$ and $\mathcal{P}\in \mathbb{R}^{I_N\times I_2\times I_{N-1}\times J}$
    according to ${2,\cdots,N-1}$-modes as follows
    \begin{align*}
        \mathcal{T}=\mathcal{S}\odot_{2{:}N-1}\mathcal{P}
        \Rightarrow
        \mathcal{T}[{:},i_2,\ldots,i_{N-1},{:}]=\mathcal{S}[{:},i_2,\ldots,i_{N-1},{:}]~\mathcal{P}[{:},i_2,\ldots,i_{N-1},{:}]
    \end{align*}
\end{defi}
\begin{theorem}\label{decN}
    The $\{2{:}N-1;N+1{:}2(N-1)\}$-TH tensor
    $ \mathsf{TH}(\mathcal{A})\in \mathbb{R}^{I_1\times I_2\times \cdots\times I_N\times I_2\times I_N-1} $,
    where
    $\widehat{\mathcal{A}}=(\Gamma,\cdots,\Gamma)_{2,\cdots,N-1}.{\mathcal{A}}
        \in \mathbb{R}^{I_{1} \times I_{2} \times \cdots \times I_{N-1} \times J}$, $\mathcal{X} \in \mathbb{R}^{J \times I_{2} \times \cdots \times I_{N}}$
    could be decomposed as follows:
    \[
        \mathsf{TH}(\mathcal{A})=({\boldsymbol{C}}^{\raisebox{1pt}{$-1$}},\cdots,{\boldsymbol{C}}^{\raisebox{1pt}{$-1$}},{\boldsymbol{C}}^{\raisebox{1pt}{${\sf T}$}},\cdots,{\boldsymbol{C}}^{\raisebox{1pt}{${\sf T}$}} )_{2{:}N-1;N+1:2(N-1)}.\mathcal{D}
    \]
    \[
        \mathcal{D}={\sf diag}_{\{2:N-1;N+1:2N-2\}}(\overline{\mathcal{A}}), \quad
        \mathcal{A}                              =(\overline{\boldsymbol{C}},\cdots,\overline{\boldsymbol{C}})_{2{:}N}.{\mathcal{A}}
    \]
\end{theorem}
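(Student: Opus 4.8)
The plan is to prove Theorem~\ref{decN} by induction on the number $N-2$ of convolution modes, taking Lemma~\ref{lemm_diagonal} as the base case ($N=3$, a single convolution mode). Throughout, I would freeze the two ``outer'' modes---mode $1$ of size $I_1$ and the shared mode $N$ of size $J$---and regard $\mathsf{TH}(\widehat{\mathcal{A}})$ purely as a multilevel Toeplitz-plus-Hankel object on the convolution modes $\{2,\dots,N-1\}$ paired with $\{N+1,\dots,2(N-1)\}$. Since the claimed decomposition applies $C^{-1}$ in every mode of the first block and $C^{\mathsf{T}}$ in every mode of the second block, and since mode-$n$ matrix products in distinct modes commute, it suffices to show that the mode-wise cosine transforms diagonalize the structure one level at a time.

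For the inductive step I would single out the first convolution index pair $(i_2,j_2)$. Fixing all remaining convolution indices together with the outer indices, the defining formula \eqref{extendTHA} collapses to a single-level Toeplitz-plus-Hankel matrix in $(i_2,j_2)$ generated by a fiber of $\widehat{\mathcal{A}}$; this is exactly a matrix of the form $\mathsf{Th}(\cdot)$ appearing in \eqref{cosine4}. Applying that scalar diagonalization introduces the factor $C^{-1}$ in mode $2$ and $C^{\mathsf{T}}$ in mode $N+1$, while the preprocessing that defines $\widehat{\mathcal{A}}$ contributes exactly one $\Gamma^{-1}$ in mode $2$; as in the computation \eqref{eq:manl}, the cancellation $C\,\Gamma\,\widehat{\mathcal{A}}[\dots]=C\,\mathcal{A}[\dots]$ then turns the symbol into the mode-$2$ cosine transform of $\mathcal{A}$. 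What remains after peeling off mode $2$ is a multilevel Toeplitz-plus-Hankel tensor in the convolution modes $\{3,\dots,N-1\}$ of the same type, to which the inductive hypothesis applies. Reassembling the per-mode factors---legitimate because mode products in distinct modes commute---yields the form $(C^{-1},\dots,C^{-1},C^{\mathsf{T}},\dots,C^{\mathsf{T}})_{2:N-1;N+1:2(N-1)}$, and collecting all the scalar symbols gives $\mathcal{D}=\mathsf{diag}_{\{2:N-1;N+1:2N-2\}}(\overline{\mathcal{A}})$ with $\overline{\mathcal{A}}$ the full cosine transform of $\mathcal{A}$ across modes $2,\dots,N-1$.

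The step I expect to be the main obstacle is the separability: one must verify that fixing a single convolution index pair genuinely leaves a lower-order object of the same multilevel Toeplitz-plus-Hankel type, so that \eqref{cosine4} and the inductive hypothesis can be chained without unexpected cross terms. This is the Toeplitz-plus-Hankel analogue of the multilevel-circulant/FFT diagonalization established for circulant tensors in \cite{pboundary}; the new subtlety is that each level simultaneously carries a Toeplitz contribution (depending on $|i_k-j_k|$) and a Hankel contribution (depending on $i_k+j_k$), so I would check carefully that the \emph{shared} generating symbol makes both contributions diagonalizable by the \emph{same} cosine matrix $C$, exactly as in the single-level identity \eqref{cosine4}, rather than requiring distinct transforms. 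A secondary bookkeeping point is tracking the $\Gamma^{-1}$ preprocessing mode by mode: because $(\Gamma^{-1},\dots,\Gamma^{-1})$ acts as one factor per convolution mode and commutes with the cosine factors in the other modes, the per-level cancellation of \eqref{eq:manl} goes through independently in every mode, which is what finally produces the clean diagonal symbol $\overline{\mathcal{A}}$.
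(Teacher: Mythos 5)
Your proposal is correct and follows essentially the same route as the paper: the paper also takes Lemma~\ref{lemm_diagonal} as the single-mode building block and peels off one convolution mode at a time (it works out the case $N=4$ explicitly, diagonalizing the $\{3,6\}$ pair first and then observing that the resulting symbol $\mathcal{D}_1$ is again a $\{2,5\}$-TH tensor, which is exactly the separability step you flag), relying on the commutation of mode products in distinct modes to assemble the factors and the $C\,\Gamma\,\widehat{\mathcal{A}}=C\,\mathcal{A}$ cancellation to identify the diagonal symbol as $\overline{\mathcal{A}}$. Your formal induction on the number of convolution modes is just the systematic version of the paper's ``without loss of generality'' two-level argument.
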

\begin{proof}
    Without loss of generality, let
    $\mathcal{A}\in\mathbb{R}^{I_1\times I_2\times I_3 \times I_4}$ .So $\mathsf{TH}(\mathcal{A}) \in\mathbb{R}^{I_1\times I_2\times I_3 \times I_4\times I_2\times I_3}$
    will be $\{2,3;5,6\}-$TH tensor. since $\mathsf{TH}(\widehat{\mathcal{A}})$ is $\{3,6\}-$TH tensor, from lemma \ref{lemm_diagonal}, we have
    \begin{align}\label{eqtictic}
        \mathsf{TH}(\widehat{\mathcal{A}})=({\boldsymbol{C}}^{\raisebox{1pt}{$-1$}},{\boldsymbol{C}}^{\raisebox{1pt}{${\sf T}$}})_{3,6}.\overline{\mathcal{D}}, \quad \overline{\mathcal{D}}={\sf diag}_{\{3,6\}}(\mathcal{D}_1), \quad \mathcal{D}_1=({\boldsymbol{C}})_3.\mathsf{TH}(\widehat{\mathcal{A}})[{:},{:},{:},{:},{:},1]
    \end{align}
    So,
    \begin{align}\label{eqpp}
        \overline{\mathcal{D}}[i_1,i_2,i_3,i_4,i_5,i_6]=\delta_{i_{3},i_{6}}\mathcal{D}_1[i_1,i_2,i_3,i_4,i_5,1].
    \end{align}
    It's clear that $\mathcal{D}_1$ is a $\{2,5\}$-TH tensor and so can be diagonalized as follows:
    \begin{align}\label{DD}
        \mathcal{D}_1=\left(C^{-1},C^{\sf T}\right)_{\{2,5\}}.{\sf diag(\mathcal{D}_2)}
    \end{align}
    where
    \begin{align}\label{D2}
        \mathcal{D}_2=({\boldsymbol{C}})_2.\mathcal{D}_1[{:},{:},{:},{:},1,1]
    \end{align}
    But by replacing $\mathcal{D}_1$ from \eqref{DD} in \eqref{D2}, we have
    \begin{align}\label{D2AB}
        \mathcal{D}_2=({\boldsymbol{C}}, {\boldsymbol{C}})_{2,3}.\mathsf{TH}(\mathcal{A})[{:},{:},{:},{:},1,1]=({\boldsymbol{C}}, {\boldsymbol{C}})_{2,3}.\mathcal{A}=\overline{\mathcal{A}}
    \end{align}
    by replacing $\mathcal{D}_1$ from \eqref{DD} to \eqref{eqtictic} and substituting it with $\mathcal{D}_1$,
    \begin{align}
        \overline{\mathcal{D}}=
          & {\sf diag}_{\{3,6\}}(\mathcal{D}_1)={\sf diag}_{\{3,6\}}\left(({\boldsymbol{C}}^{\raisebox{1pt}{$-1$}},{\boldsymbol{C}}^{\raisebox{1pt}{${\sf T}$}})_{2,5}.{\sf diag}_{\{2,5\}}(\mathcal{D}_2)\right) \nonumber \\
        = & ({\boldsymbol{C}}^{\raisebox{1pt}{$-1$}},{\boldsymbol{C}}^{\raisebox{1pt}{${\sf T}$}})_{2,5}. \left( {\sf diag}_{\{3,6\}}\left({\sf diag}_{\{2,5\}}(\mathcal{D}_2)\right)\right)                      \nonumber \\
        = & ({\boldsymbol{C}}^{\raisebox{1pt}{$-1$}},{\boldsymbol{C}}^{\raisebox{1pt}{${\sf T}$}})_{2,5}. ({\sf diag}_{\{\{2,5\},\{3,6\}\}}(\mathcal{D}_2))                                                      \nonumber  \\
        = &
        ({\boldsymbol{C}}^{\raisebox{1pt}{$-1$}},{\boldsymbol{C}}^{\raisebox{1pt}{${\sf T}$}})_{2,5}.{\sf diag}_{\{\{2,5\},\{3,6\}\}}(\overline{\mathcal{A}})
    \end{align}
    which the last equation comes from \eqref{D2AB} and completes the proof.
\end{proof}

\begin{lemma}
    The $\star_c{}\text{-Product}$ $\mathcal{Y}=\mathcal{A}\star_c \mathcal{X}$ for $N$-order tensors $\mathcal{A}$ and $ \mathcal{X}$ could be handeled by cosine transform as follows:
    \[\mathcal{Y}=({\boldsymbol{C}}^{\raisebox{1pt}{$-1$}},\cdots,{\boldsymbol{C}}^{\raisebox{1pt}{$-1$}})_{2{:}N-1}\overline{\mathcal{Y}}\]
    where
    \begin{align*}
        \overline{\mathcal{Y}}  =\overline{\mathcal{A}}\odot_{2{:}N-1}\overline{\mathcal{X}},\quad
        \overline{\mathcal{X}}  =({\boldsymbol{C}},\cdots,{\boldsymbol{C}})_{2{:}N-1}.\mathcal{X},\quad
        \overline{\mathcal{A}}  =({\boldsymbol{C}},\cdots,{\boldsymbol{C}})_{2{:}N-1}.\mathcal{A}.
    \end{align*}
\end{lemma}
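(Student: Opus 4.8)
The plan is to mirror, at arbitrary order, the single-mode computation carried out for $N=3$ just after Lemma~\ref{lemm_diagonal}, now feeding in the full diagonalization supplied by Theorem~\ref{decN}. First I would start from the definition $\mathcal{Y}=\mathcal{A}\star_c\mathcal{X}=\langle \mathsf{TH}(\widehat{\mathcal{A}}),\mathcal{X}\rangle_{N:2(N-1);1:N-1}$ and substitute the decomposition of Theorem~\ref{decN}, namely $\mathsf{TH}(\widehat{\mathcal{A}})=(C^{-1},\cdots,C^{-1},C^{\sf T},\cdots,C^{\sf T})_{2:N-1;N+1:2(N-1)}.\mathcal{D}$ with $\mathcal{D}=\mathsf{diag}_{\{2:N-1;N+1:2N-2\}}(\overline{\mathcal{A}})$, whose diagonal entries are $\overline{\mathcal{A}}=(C,\cdots,C)_{2:N-1}.\mathcal{A}$ after the $\Gamma^{-1}$ inside $\widehat{\mathcal{A}}$ and the $\mathsf{diag}(c_1)$ scaling are absorbed exactly as in Lemma~\ref{lemm_diagonal}.

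The key technical step is to commute these mode-wise matrix factors past the contraction. The factors $C^{-1}$ sit on modes $2{:}N-1$, which are the \emph{free} (output) modes of the contraction, so by the definitions of the mode-$n$ product and of the contractive product they can be pulled out in front unchanged. The factors $C^{\sf T}$ sit on modes $N+1{:}2(N-1)$, which are precisely the \emph{contracted} modes of $\mathsf{TH}(\widehat{\mathcal{A}})$; contracting a tensor of the form $(C^{\sf T})_{m}.\mathcal{Z}$ against $\mathcal{X}$ along mode $m$ equals, by the transpose identity for the contractive product, contracting $\mathcal{Z}$ against $(C)$ applied to the matching mode of $\mathcal{X}$. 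Applying this on each of the $N-2$ contracted modes (they act on disjoint modes, hence commute) gives
\[
\mathcal{Y}=(C^{-1},\cdots,C^{-1})_{2:N-1}.\,\langle \mathcal{D},(C,\cdots,C)_{2:N-1}.\mathcal{X}\rangle_{N:2(N-1);1:N-1},
\]
which is the $N$-mode analogue of equation~\eqref{RezL}.

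Setting $\overline{\mathcal{X}}=(C,\cdots,C)_{2:N-1}.\mathcal{X}$ and $\overline{\mathcal{Y}}=(C,\cdots,C)_{2:N-1}.\mathcal{Y}$ already yields the outer formula $\mathcal{Y}=(C^{-1},\cdots,C^{-1})_{2:N-1}.\overline{\mathcal{Y}}$ together with $\overline{\mathcal{Y}}=\langle\mathcal{D},\overline{\mathcal{X}}\rangle_{N:2(N-1);1:N-1}$. It then remains to evaluate this contraction against the multiply-diagonal core $\mathcal{D}$. Because $\mathcal{D}=\mathsf{diag}_{\{2:N-1;N+1:2N-2\}}(\overline{\mathcal{A}})$ is nonzero only when $i_k=j_k$ for every $k\in\{2,\dots,N-1\}$, the sums over $j_2,\dots,j_{N-1}$ collapse onto the single tuple $\bar j=\bar i$, leaving a plain matrix product on each fixed multi-index $\bar i$, namely $\overline{\mathcal{Y}}[:,\bar i,:]=\overline{\mathcal{A}}[:,\bar i,:]\,\overline{\mathcal{X}}[:,\bar i,:]$. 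By the definition of the $\{2,\dots,N-1\}$-Hadamard product this is exactly $\overline{\mathcal{Y}}=\overline{\mathcal{A}}\odot_{2:N-1}\overline{\mathcal{X}}$, completing the argument.

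I expect the main obstacle to be the index bookkeeping in the commutation step: verifying the transpose identity that turns $C^{\sf T}$ on the contracted modes of $\mathsf{TH}(\widehat{\mathcal{A}})$ into $C$ on the corresponding modes of $\mathcal{X}$, and keeping the pairing of the $N-2$ diagonal mode-pairs $\{k,(N-1)+k\}$ straight so that the collapse of the contraction against $\mathcal{D}$ produces precisely the slice-wise product defining $\odot_{2:N-1}$. For $N=3$ this reduces to the single-mode computation already verified; the only genuine content at general order is that the manipulation survives being applied simultaneously to all modes $2,\dots,N-1$, which is guaranteed because the mode-wise products and the per-mode diagonalizations act on disjoint modes and therefore commute.
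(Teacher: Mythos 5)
Your proposal is correct and follows essentially the same route as the paper's proof: substitute the diagonalization of $\mathsf{TH}(\widehat{\mathcal{A}})$ from Theorem~\ref{decN} into the contraction, pull the $C^{-1}$ factors out of the free modes while the $C^{\sf T}$ factors on the contracted modes transfer as $C$ onto $\mathcal{X}$, and then collapse the contraction against the multiply-diagonal core $\mathcal{D}$ into the slice-wise matrix product defining $\odot_{2:N-1}$. Your write-up is in fact more explicit than the paper's about the commutation/transpose identity and the collapse of the sum onto $\bar{j}=\bar{i}$, which the paper leaves as "immediately follows from the definitions."
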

\begin{proof}
    By substituting the decomposition of $\mathsf{TH}(\widehat{\mathcal{A}})$ in Theorem \ref{decN}, in
    \begin{align*}
        \mathcal{Y}=\langle{\mathsf{TH}}(\widehat{\mathcal{A}}),\mathcal{X}\rangle_{N{:}2(N-1),2{:}N-1}
    \end{align*}
    we have
    \begin{align}\label{oh}
        \mathcal{Y} & =({\boldsymbol{C}}^{\raisebox{1pt}{$-1$}},\cdots,{\boldsymbol{C}}^{\raisebox{1pt}{$-1$}})_{2{:}N-1}.\left[
        \langle\mathcal{D},({\boldsymbol{C}},\cdots,{\boldsymbol{C}})_{2{:}N-1}.\mathcal{X}\rangle_{N{:}2(N-1),2{:}N-1}
        \right]                                                                                                                      \\
        \nonumber
                    & = ({\boldsymbol{C}}^{\raisebox{1pt}{$-1$}},\cdots,{\boldsymbol{C}}^{\raisebox{1pt}{$-1$}})_{2{:}2(N-1)}.\left[
        \langle\mathcal{D},\overline{\mathcal{X}}\rangle_{N{:}2(N-1),2{:}N-1}
        \right]
    \end{align}
    Now its clear that
    \begin{align*} \overline{\mathcal{Y}} & =\langle D,\overline{\mathcal{X}}\rangle_{N{:}2(N-1),2{:}N-1} \\
                                      & =\overline{\mathcal{A}}\odot_{2{:}N-1}\overline{\mathcal{X}}
    \end{align*}
    where the second equation comes from definition of $\mathcal{D}$ and completes the proof.
\end{proof}

The algorithm of computing $\star_c{}\text{-Product}$ for arbitrary order tensor is provided in Algorithm \ref{Alg2}.
\begin{algorithm}[h]
    \caption{$\star_c{}\text{-Product}$ for arbitrary order tensors}\label{Alg2}
    \begin{algorithmic}[1]
        \Require  $\mathcal{A}\in \mathbb{R}^{I_{1} \times I_{2} \times \cdots \times I_{N-1}\times J} $,
        $ \mathcal{X}\in \mathbb{R}^{J\times I_{2} \times  I_{3} \times \cdots \times I_{N}} $
        \Ensure $\mathcal{Y}\in \mathbb{R}^{ I_{1} \times \cdots \times I_{N}}$
        \State $ {\bar{\mathcal{A}}}=\left(C, \cdots, C \right)_{2:N-1}.\mathcal{A}$,
        $ {\bar{\mathcal{Y}}}=\left(C, \cdots, C\right)_{2:N-1}.\mathcal{Y} $
        \State
        ${\mathcal{\bar{Y}}}[:,\bar{i},:] = {\mathcal{\bar{A}}}[:,\bar{i},:]{\mathcal{\bar{X}}}[:,\bar{i},:],\quad \bar{i}=i_2,\cdots,i_{N-1},i_k=1,\cdots, I_k $
        \State ${\mathcal{Y}} = \left( C^{\sf T}, \cdots, C^{\sf T} \right)_{2:N-1}.{\overline{\mathcal{Y}}}$
    \end{algorithmic}
\end{algorithm}

Also we provide the SVD decomposition for any arbitrary $ N $-order tensor based on $\star_c$ product. For obtaining this decomposition we need to define transpose of N-order tensor based on $\star_c$ product.
$\mathcal{A}^{\sf T}$ is adjoint operator(Transpose) of $A$ according	to $\star_c$ operator if satisfies
\begin{align*}
    \left < \mathcal{A}\star_c \mathcal{X},\mathcal{Y}\right >=
    \left <  \mathcal{X},\mathcal{A}^{\sf T}\star_c\mathcal{Y}\right >
\end{align*}
for every tensors $\mathcal{X}, \mathcal{Y}.$ By some mathematical manupulation , one can see that this Transpose tensor is
$\mathcal{A}^{\sf T}\in \mathbb{R}^{I_N\times I_2\times \cdots \times I_{N-1}\times I_1}$ with the following elements
\begin{align}
    \mathcal{A}^{\sf T}[i_1,i_2,\cdots,i_{N-1},i_N]= \mathcal{A}[i_N,i_2,\cdots,i_{N-1},i_1]
\end{align}

\begin{theorem}:\label{TcSVDTh}
    \textbf{$\star_c{}\text{-SVD}$ for an $ n $-order tensor.}For each tensor $ \mathcal{A} \in \mathbb{R}^{I_1\times \cdots\times I_N} $ with arbitrary order $N$, the following decomposition named $\star_c{}\text{-SVD}$ exists
    \begin{align*}
        \mathcal{A} = \mathcal{U}\star_c \mathcal{S} \star_c \mathcal{V}^{\mathsf{T}}
    \end{align*}
    where
    $ \mathcal{U} \in \mathbb{R}^{I_1\times \cdots\times I_{N-1}\times I_N} $ and
    $ \mathcal{V} \in \mathbb{R}^{I_1\times \cdots\times I_{N-1}\times I_N} $
    are orthogonal tensors according to $\star_c{}\text{-Product}$ and $ \mathcal{S} \in \mathbb{R}^{I_N \times I_{2}\times \cdots \times I_{N-1} \times I_N} $ is an
    $ \{1,N\} $-diagonal tensor.
\end{theorem}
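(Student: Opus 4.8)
The plan is to reduce the decomposition to an ordinary, slice-wise matrix SVD carried out in the cosine-transform domain, mirroring the classical t-SVD construction but using the $\star_c$ diagonalization established above. The central tool is the last lemma: writing $\overline{\mathcal{A}}=(C,\ldots,C)_{2:N-1}.\mathcal{A}$ and $\overline{\mathcal{X}}=(C,\ldots,C)_{2:N-1}.\mathcal{X}$, the identity $\mathcal{Y}=\mathcal{A}\star_c\mathcal{X}$ is equivalent to the slice-wise matrix identities $\overline{\mathcal{Y}}[:,\bar{i},:]=\overline{\mathcal{A}}[:,\bar{i},:]\,\overline{\mathcal{X}}[:,\bar{i},:]$ for every multi-index $\bar{i}=(i_2,\ldots,i_{N-1})$, the converse direction following from invertibility of the transform. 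In other words, the $\star_c$-product becomes an ordinary matrix product independently on each transform-domain slice. Since the cosine matrix $C$ is real, the forward map $\mathcal{A}\mapsto\overline{\mathcal{A}}$ sends real tensors to real tensors and is inverted by $(C^{-1},\ldots,C^{-1})_{2:N-1}$, so it suffices to build the decomposition one slice at a time in the transform domain.

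First I would transform $\mathcal{A}$ to obtain $\overline{\mathcal{A}}$ and, for each fixed $\bar{i}$, form the real matrix $\overline{A}_{\bar{i}}:=\overline{\mathcal{A}}[:,\bar{i},:]$ and compute its ordinary matrix SVD $\overline{A}_{\bar{i}}=\overline{U}_{\bar{i}}\,\overline{S}_{\bar{i}}\,\overline{V}_{\bar{i}}^{\mathsf{T}}$, with $\overline{U}_{\bar{i}},\overline{V}_{\bar{i}}$ orthogonal and $\overline{S}_{\bar{i}}$ nonnegative diagonal. Assembling these slices yields transform-domain tensors $\overline{\mathcal{U}},\overline{\mathcal{S}},\overline{\mathcal{V}}$, and I would then define $\mathcal{U},\mathcal{S},\mathcal{V}$ by the inverse cosine transform $(C^{-1},\ldots,C^{-1})_{2:N-1}$. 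Reading the lemma's correspondence backwards, the $\bar{i}$-slice of $\mathcal{U}\star_c\mathcal{S}\star_c\mathcal{V}^{\mathsf{T}}$ in the transform domain equals $\overline{U}_{\bar{i}}\,\overline{S}_{\bar{i}}\,\overline{V}_{\bar{i}}^{\mathsf{T}}=\overline{A}_{\bar{i}}$, so applying the inverse transform recovers $\mathcal{A}$ exactly.

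It then remains to check the three structural claims. For the transpose, I would use the elementwise definition $\mathcal{V}^{\mathsf{T}}[i_1,\bar{i},i_N]=\mathcal{V}[i_N,\bar{i},i_1]$: because the cosine transform touches only modes $2,\ldots,N-1$ while the transpose swaps only modes $1$ and $N$, the two operations commute and give $\overline{\mathcal{V}^{\mathsf{T}}}[:,\bar{i},:]=(\overline{\mathcal{V}}[:,\bar{i},:])^{\mathsf{T}}=\overline{V}_{\bar{i}}^{\mathsf{T}}$. The $\star_c$-identity is the tensor whose every transform slice is the identity matrix, so $\mathcal{U}$ (resp. $\mathcal{V}$) is $\star_c$-orthogonal exactly when each $\overline{U}_{\bar{i}}$ (resp. $\overline{V}_{\bar{i}}$) is an orthogonal matrix, which holds by construction. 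Finally, each $\overline{S}_{\bar{i}}$ is diagonal, so $\overline{\mathcal{S}}[i_1,\bar{i},i_N]=0$ whenever $i_1\neq i_N$; since the inverse transform mixes only the middle indices, this vanishing survives and $\mathcal{S}$ is $\{1,N\}$-diagonal.

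The main obstacle is not a single deep step but the bookkeeping needed to guarantee that the three structure-preserving facts hold simultaneously: that $\star_c$ is slice-wise matrix multiplication, that the $\star_c$-transpose transforms to the slice-wise matrix transpose, and that the $\star_c$-identity transforms to slice-wise identities, so that $\star_c$-orthogonality becomes equivalent to ordinary orthogonality on every transform slice. Once these commutation relations are in place, existence of the $\star_c{}\text{-SVD}$ follows immediately from existence of the ordinary matrix SVD applied slice by slice, and realness of $\mathcal{U},\mathcal{S},\mathcal{V}$ is automatic because the cosine transform and its inverse preserve real tensors.
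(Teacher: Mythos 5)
Your proposal is correct and follows essentially the same route as the paper: transform $\mathcal{A}$ into the cosine domain, take an ordinary matrix SVD on each slice $\overline{\mathcal{A}}[:,\bar{i},:]$, assemble and invert the transform, and use the fact that the $\star_c$-product is slice-wise matrix multiplication in the transform domain to conclude $\mathcal{U}\star_c\mathcal{S}\star_c\mathcal{V}^{\mathsf{T}}=\mathcal{A}$. Your explicit verification of the three structural claims (transpose commuting with the transform, slice-wise orthogonality, and preservation of the $\{1,N\}$-diagonal pattern under the inverse transform) is a welcome addition that the paper leaves implicit.
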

\begin{proof}:
    Let $ \overline{\mathcal{A}}=(\boldsymbol{C}_2,\cdots,\boldsymbol{C}_{N-1})_{2{:}N-1}.\mathcal{A} $, for every index set $ \bar{i}=i_2,\cdots,i_{N-1} $ consider the SVD of $ \overline{\mathcal{A}}[{:},\bar{i},{:}] $ matrix as follows{:}
    \begin{align}\label{eq_plus_1}
        \overline{\mathcal{A}}[{:},\bar{i},{:}]=
        \overline{U}_{\bar{i}}\overline{S}_{\bar{i}}\overline{V}_{\bar{i}}^{\mathsf{T}}, \quad
        \overline{\boldsymbol{U}}_{\bar{i}} \in \mathbb{R}^{I_1\times I_1}, \quad \overline{\boldsymbol{S}}_{\bar{i}} \in \mathbb{R}^{I_1\times I_N}, \quad
        \overline{\boldsymbol{V}}_{\bar{i}} \in \mathbb{R}^{I_N\times I_N}.
    \end{align}
    if we construct the tensors $ \overline{{\mathcal{U}}}\in\mathbb{R}^{I_1\times I_2 \times \cdots\times I_{N-1}\times I_1} $, $\overline{{\mathcal{S}}}\in \mathbb{R}^{I_1\times I_2 \times \cdots I_{N-1}\times I_N}$ and
    $ \overline{\mathcal{V}}\in\mathbb{R}^{I_N\times I_2 \times \cdots\times I_{N-1}\times I_N} $, as follows{:}
    \begin{align*}
        \overline{\mathcal{U}}[{:},\bar{i},{:}]=\overline{U}_{\bar{i}},\quad
        \overline{\mathcal{S}}[{:},\bar{i},{:}]=\overline{S}_{\bar{i}},\quad
        \overline{\mathcal{V}}[{:},\bar{i},{:}]=\overline{{V}}_{\bar{i}},\quad
        \bar{i} = i_2,\cdots, i_{N-1}.
    \end{align*}
    We claim that
    \begin{align*}
        \mathcal{P}=\mathcal{U}\star_c \mathcal{S} \star_c \mathcal{V}^{\mathsf{T}}
    \end{align*}
    where
    \begin{align*}   
    \mathcal{U}=(\boldsymbol{C}_2^{-1},\cdots,\boldsymbol{C}_{N-1}^{-1})_{2{:}N-1}.\overline{\mathcal{U}}, \quad
    \mathcal{V}=(\boldsymbol{C}_2^{-1},\cdots,\boldsymbol{C}_{N-1}^{-1})_{2{:}N-1}.\overline{\mathcal{V}}, \quad
    \mathcal{S}=(\boldsymbol{C}_2^{-1},\cdots,\boldsymbol{C}_{N-1}^{-1})_{2{:}N-1}.\overline{\mathcal{S}}
    \end{align*}
    is equal to $\mathcal{A}$. For showing this
    let $\boldsymbol{\mathcal{Q}}=\mathcal{S} \star_c \mathcal{V}^{\mathsf{T}}$, then $\mathcal{P}=\mathcal{U}\star_c \boldsymbol{\mathcal{Q}}$.
    By definition of $\star_c{}\text{-Product}$ we have
    \begin{align*}
        \overline{\mathcal{P}}[{:},i,{:}]              & =\overline{\mathcal{U}}[{:},i,{:}]\overline{\boldsymbol{\mathcal{Q}}}[{:},i,{:}] \\
        \overline{\boldsymbol{\mathcal{Q}}}[{:},i,{:}] & =\overline{\mathcal{S}}[{:},i,{:}]\overline{\mathcal{V}}^{\mathsf{T}}[{:},i,{:}]
    \end{align*}
    So,
    $\overline{\mathcal{P}}[{:},\bar{i},{:}]=\overline{\mathcal{U}}[{:},\bar{i},{:}]\overline{\mathcal{S}}[{:},\bar{i},{:}]\overline{\mathcal{V}}^{\mathsf{T}}[{:},\bar{i},{:}]=\overline{\mathcal{A}}[{:},\bar{i},{:}]$
    therefore $\overline{\mathcal{P}}=\overline{\mathcal{A}}$ and so $\mathcal{P}=\mathcal{A}$.
\end{proof}
The algorithm \ref{Algf} provides the process of $\star_c{}\text{-SVD}$ for an arbitrary order tensor.
Also, like $t-SVD$ the truncated version of $\star_c{}\text{-SVD}$ for arbitrary $r \leq \min{\{I_1,I_N\}}$ could be obtained
as
follows
\[
    \mathcal{A}\approx \mathcal{A}_r=\mathcal{U}_r\star_c\mathcal{S}_r\star_c\mathcal{V}_r^{\sf T}
\]
where
\begin{align}\label{man}
    \overline{\mathcal{U}_r}[:,\bar{i},:]=\overline{U}_{\bar{i}}[:,1:r], \quad \overline{\mathcal{S}_r}(:,\bar{i},:)=\overline{S}_{\bar{i}}[1:r,1:r]\quad \overline{\mathcal{V}_r}(:,\bar{i},:)=\overline{V}_{\bar{i}}[:,1:r]^{\sf T}
\end{align}
from SVD of $\overline{\mathcal{A}}[:,\bar{i},:]$ in \eqref{eq_plus_1}.
This truncation could be used for different tasks like compression and denoising. 

The mentioned truncation is applied truncated SVD in each slices of data in cosine space. In the following we show that further this truncation another truncation that removes some slices (corresponding to some specific frequencies) in cosine space also could be defined for this decomposition.
As one know  for arbitrary vector $a$, $\bar{a}=Ca$, denotes the coefficient of cosine transform. Here the $\bar{a}_i$ with small and large indices correspond to low and high frequencies. So, by removing the $a_i$ with large indices, we have one high pass filter that could be used as denoising and also compression of data in $a$. So in $\star_c$-SVD, of order-3 tensor $\mathcal{A}$, the slice's $\overline{A}[:,i,:]$ for small and large indices, contain low and high pass coefficients, respectively. Therefor if for some specific index like $l$, we set $\overline{A}[:,i,:]=0$ for $i>l$, we  have high pass filter. Now if for $i\leq l$, we use the truncated filter like \eqref{eq_plus_1}, we have a double filter decomposition.
This double filtering uses the benefits of filtering on svd and frequency space, simultaneously and we expect that has better results in comparison filers like \eqref{man}.
This double truncation based on $\star_c-SVD$ for arbitrary order can be seen in  algorithm \ref{Algf}. 

\begin{algorithm}
    \caption{Double filtering by $\star_c{}\text{-SVD}$}\label{Algf}
    \begin{algorithmic}[1]
        \Require
        $ \mathcal{A}\in \mathbb{R}^{I_1\times \cdots\times I_N} $,
        $\ell$ truncation index for cosine space,
        $k$ truncation index for SVD space,
        \Ensure truncated $\star_c{}\text{-SVD}$ ($\mathcal{A}_k^l=\mathcal{U}_k^l\star_c\mathcal{S}_k^l\star_c\mathcal{V}_k^l
            \approx \mathcal{A}$)
        \State Compute $\overline{\mathcal{A}}=\left(C,\cdots, C\right)_{2:N-1}.\mathcal{A}$
        \For{ all $\bar{i}=i_2,\cdots,i_{N-1}$, $\overline{\mathcal{A}}[:,\bar{i},:]$}
        \If { $i_s<\ell$ for all $s=2,\cdots,N-1$}
        \State Compute
        $\overline{\mathcal{A}}[:,\bar{i},:]=\overline{\mathcal{U}}_{\bar{i}}[:,1{:}k]\overline{\mathcal{S}}_{\bar{i}}[1{:}k,1{:}k]\overline{\mathcal{V}}_{\bar{i}}[:,1{:}k]^T$
        \Else
        \State Let $\overline{\mathcal{A}}[:,\bar{i},:]=0$ i.e (
        $\overline{\mathcal{U}}_{\bar{i}}[:,1{:}k]=0,
            \overline{\mathcal{S}}_{\bar{i}}[1{:}k,1{:}k]=0,
            \overline{\mathcal{V}}_{\bar{i}}[:,1{:}k]=0$
        )
        \EndIf
        \EndFor
        \State $\mathcal{U}_k^l=\left(C^{-1},\cdots, C^{-1}\right)_{2:N-1}.\overline{\mathcal{U}}$
        \State $\mathcal{S}_k^l=\left(C^{-1},\cdots, C^{-1}\right)_{2:N-1}.\overline{\mathcal{S}}$
        \State $\mathcal{V}_k^l=\left(C^{-1},\cdots, C^{-1}\right)_{2:N-1}.\overline{\mathcal{V}}$
    \end{algorithmic}
\end{algorithm}

\section{Experimental results}\label{Exp}

In this section, we present experiments on both synthetic random tensors and well-known datasets, as described in Table \ref{table_databaseName}, to demonstrate the efficacy of the proposed $\star_c{}\text{-SVD}$ method. The last dimension in Table \ref{table_databaseName} represents the number of samples for each dataset. For example, if dataset $\mathcal{A} \in R^{I_1,I_2,\dots,I_N}$, then $I_N$ denotes the number of samples in $\mathcal{A}$. In these experiments, we compare the performance of our $\star_c{}\text{-SVD}$ method to that of t-SVD in compression, clustering, and classification applications.
\begin{table}[h]
    \caption{Characters of different datasets}\label{table_databaseName}
    \resizebox{0.85\linewidth}{!}{
        \begin{tabular}{|c|c|ccc|}
            \hline
            Dataset Name                                    & Size                                & Compression & Clustering & Classification \\
            \hline
            \begin{tabular}[c]{@{}c@{}}
                Brain MRI \\
                (Brain Tumor Detection)  \cite{datasetBrainMRI}
            \end{tabular} & $253\times  240\times 240$          & Yes         & No         & Yes                                              \\
            \hline
            CBCL-face                                       & $2414\times 192\times 168$          & Yes         & No         & Yes            \\
            \hline
            Coil-100  \cite{nene1988columbia}               & $7200\times 128\times 128$          & Yes         & No         & Yes            \\
            \hline
            Digit Recognizer (MNIST)                        & $60000\times 28\times 28 $          & Yes         & No         & Yes            \\
            \hline
            Yale                                            & $2414\times 192\times 168 $         & Yes         & No         & Yes            \\
            \hline
            Cifar                                           & $60000\times 32\times 32\times 3$   & Yes         & No         & Yes            \\
            \hline
            3D-MNIST                                        & $10000\times 16\times 16\times 16$  & Yes         & No         & Yes            \\
            \hline
            Bonsai                                          & $256\times 256 \times 256$          & Yes         & No         & No             \\
            \hline
            Engine                                          & $256 \times 256 \times 128$         & Yes         & No         & No             \\
            \hline
            Foot                                            & $256\times 256 \times 256$          & Yes         & No         & No             \\
            \hline
            Skull                                           & $256\times 256 \times 256$          & Yes         & No         & No             \\
            \hline
            StatueLeg                                       & $341 \times 341 \times 93$          & Yes         & No         & No             \\
            \hline
            SyntheticA                                      & $100\times 40\times 100$            & Yes         & No         & No             \\
            \hline
            SyntheticB                                      & $100\times 100\times 100$           & Yes         & No         & No             \\
            \hline
            SyntheticC                                      & $100\times 500\times 100$           & Yes         & No         & No             \\
            \hline
            SyntheticD                                      & $100\times 500\times 500\times 100$ & Yes         & No         & No             \\
            \hline
            Letters                                         & $20000\times 16$                    & No          & Yes        & No             \\
            \hline
            PIE pose                                        & $2856\times 32 \times 32$           & No          & Yes        & No             \\
            \hline
            PenDigits                                       & $10992\times 16$                    & No          & Yes        & No             \\
            \hline
            USPS                                            & $11000\times 16\times 16$           & No          & Yes        & No             \\
            \hline
        \end{tabular}
    }
\end{table}

\subsection{Compression}
As the first application, we use the proposed tensor decomposition as a compression method and compare tits results with the $t$-SVD.
In compression, only the decomposed factors of the main data are stored, and the data is reconstructed when it is called. In $\star_c$-SVd and $t$-SVd the factors could be stored in original or frequency spaces as follows:
\begin{itemize}
    \item \textbf{Storage in the frequency domain(SFD):}Due to properties of these decomposition's we show storing factors in frequency space is better than storing in the main space in terms of storage space and computational complexity. For simplicity and without lose of generality consider 3-order tensor $\mathcal{A}\in \mathbb{R}^{I_1\times I_2\times I_3}$. In this case the storage of factors of double filtered $\star_c$-SVD(l,k) is from order $(I_1+I_3)lk$ and the operations for reconstruction of $\mathcal{U}_k^l\star_c \mathcal{S}_k^l\star_c \mathcal{U}_k^{l^{\sf T}}$ is from order $I_1I_3kl+I_1I_3I_2\log(I_2)$. Because the factors of the $t$-SVD are complex in the frequency space, the storage space and operations required to reconstruct the original data are twice more than $\star_c$-SVD method.
    \item \textbf{Storage in the main domain(SMD):} If one saves the factors of $\star_c$-SVD in the main space, the storage and operations for reconstruction are from orders $(I_1+I_3)I_2k$ and $I_1I_3kl+2I_1I_3I_2\log(I_2)$, respectively, which is more than the first case. In this case the storage and operations for reconstruction for $t$-SVD are from orders $(I_1+I_3)I_2k$ and $2(I_1I_3kl+2I_1I_3I_2\log(I_2))$, respectively.
\end{itemize}
Its clear that storing the factors in the frequency domain is better than the last case. Also $\star_c$-SVD has less storage and reconstruction computational complexity  in comparison with $t$-SVD, in both cases. Also, mentioned complexities are summarized in Table \ref{s}.

\begin{table}[!htbp] \centering
\caption{ Storage Vs operation for reconstruction in Two approaches(Saving in main (SMD) or frequency (SFD) domains) for $t$-SVD (l,k) and $\star_c$-SVD(l,k)}
\label{s}
\begin{tabular}{@{\extracolsep{5pt}} llrrr} 
\\[-1.8ex]\hline 
\hline \\[-1.8ex] 
\multicolumn{1}{c}{Saving Approach} & \multicolumn{1}{c}{Tensor decomposition} & \multicolumn{1}{c}{Storage} & \multicolumn{1}{c}{Operations for reconstruction}  \\
\hline \\[-1.8ex] 
\multirow{ 2 }{*}{ SFD }  &  $t$-SVD  &  $2(I_1+I_3)lk$ &  2($I_1I_3kl+I_1I_3I_2\log(I_2)$)  \\
~~~~~  &  $\star_c$-SVD  &  $(I_1+I_3)lk$ &  $I_1I_3kl+I_1I_3I_2\log(I_2)$ \\
\hline \\[-1.8ex] 
\multirow{ 2 }{*}{ SMD }  &  $t$-SVD  &  $(I_1+I_3)I_2k$  &  $2(I_1I_3kl+2I_1I_3I_2\log(I_2))$\\
~~~~~  &  $\star_c$-SVD  &  $(I_1+I_3)I_2k$   &  $I_1I_3kl+2I_1I_3I_2\log(I_2)$  \\
\hline \\[-1.8ex] 
\end{tabular}
\end{table}

In the following for compression we considered some experiments for compression. For simplicity we did not used double filtering(removing high frequencies did not used). So in the following $k$ denotes the truncation of slices of data in frequency domain.
Table  \ref{TaApprox} reports running time  and Frobenius norm error between exact synthetic data and reconstructed data by $t$-SVD and $\star_c{}\text{-SVD}$ for different $k$.It is obvious that $\star_c{}\text{-SVD}$ has a better approximation and running time.
\begin{table}[h!]
    \renewcommand{\arraystretch}{2}
    \caption{\footnotesize{Approximation data by truncated t-SVD and truncated $\star_c{}\text{-SVD}$ approach}\label{TaApprox}}
    \resizebox{1\linewidth}{!}{
        \begin{tabular}{|c|c|c||ccccccccccccc|}
            \hline
            Dataset                     & \multicolumn{2}{c}{method}              & \multicolumn{13}{c|}{k}                                                                                                                                                                                                                    \\
            \hline
            \multirow{5}{*}{SyntheticA} & \multicolumn{2}{c}{k}                   & $ 5 $                   & $ 6 $        & $ 7 $        & $ 8 $        & $ 9 $          & $ 10 $       & $ 11 $       & $ 12 $         & $ 15 $         & $ 20 $         & $ 25 $         & $ 30 $         & $ 35$                           \\\hline\hline
                                        & \multirow{2}{*}{$\star_c{}\text{-SVD}$} & $ \|.\|_F $             & $ 150.6799 $ & $ 145.8192 $ & $ 141.0347 $ & $ 136.3285 $   & $ 131.7132 $ & $ 127.1617 $ & $ 122.6856 $   & $ 118.2726 $   & $ 105.3673 $   & $ 84.8171 $    & $ 65.1434 $    & $ 45.9772 $    & $ 26.1436 $    \\
                                        &                                         & Time                    & $ 0.0080 $   & $ 0.0088 $   & $ 0.0089 $   & $ 0.0089 $     & $ 0.0092 $   & $ 0.0095 $   & $ 0.0095 $     & $ 0.0101 $     & $ 0.0108 $     & $ 0.0125 $     & $ 0.0142 $     & $ 0.0162 $     & $ 0.0176$      \\ \cline{2-16}
                                        & \multirow{2}{*}{t-SVD}                  & $ \|.\|_F $             & $ 151.0204 $ & $ 146.1793 $ & $ 141.4365 $ & $ 136.7594 $   & $ 132.1529 $ & $ 127.6142 $ & $ 123.1437 $   & $ 118.7473 $   & $ 105.8626 $   & $ 85.2859 $    & $ 65.6273 $    & $ 46.4386 $    & $ 26.4272$     \\
                                        &                                         & Time                    & $ 0.0107 $   & $ 0.0109 $   & $ 0.0118 $   & $ 0.0132 $     & $ 0.0132 $   & $ 0.01383 $  & $ 0.0146 $     & $ 0.0162 $     & $ 0.0185 $     & $ 0.0226 $     & $ 0.0278 $     & $ 0.0327 $     & $ 0.0392$      \\\hline
            \multirow{5}{*}{SyntheticB} & \multicolumn{2}{c}{k}                   & $ 5 $                   & $ 10 $       & $ 15 $       & $ 20 $       & $ 25 $         & $ 35 $       & $ 45 $       & $ 55 $         & $ 65 $         & $ 75 $         & $ 85 $         & $ 95 $         & $ 100$                          \\\hline\hline
                                        & \multirow{2}{*}{$\star_c{}\text{-SVD}$} & $ \|.\|_F $             & $ 267.5346 $ & $ 243.5508 $ & $ 221.3552 $ & $ 200.5162 $   & $ 180.8339 $ & $ 144.4158 $ & $ 111.6186 $   & $ 82.3923 $    & $ 56.6329 $    & $ 34.4405 $    & $ 16.3730 $    & $ 3.6567 $     & $ 0.2511$      \\
                                        &                                         & Time                    & $ 0.0124 $   & $ 0.0149 $   & $ 0.0164 $   & $ 0.0181 $     & $ 0.0205 $   & $ 0.0260 $   & $ 0.0299 $     & $ 0.0323 $     & $ 0.0430 $     & $ 0.0394 $     & $ 0.0471 $     & $ 0.0561 $     & $ 0.0562$      \\\cline{2-16}
                                        & \multirow{2}{*}{t-SVD}                  & $ \|.\|_F $             & $ 267.7304 $ & $ 243.8353 $ & $ 221.6783 $ & $ 200.8804 $   & $ 181.2331 $ & $ 144.8921 $ & $ 112.1286 $   & $ 82.9069 $    & $ 57.1755 $    & $ 34.9470 $    & $ 16.8801 $    & $ 3.8276 $     & $ 0.2875$      \\
                                        &                                         & Time                    & $ 0.0229 $   & $ 0.0282 $   & $ 0.0362 $   & $ 0.0341 $     & $ 0.0393 $   & $ 0.0509 $   & $ 0.0624 $     & $ 0.0680 $     & $ 0.0795 $     & $ 0.0936 $     & $ 0.0994 $     & $ 0.1129 $     & $ 0.1150$      \\\hline
            \multirow{5}{*}{SyntheticC} & \multicolumn{2}{c}{k}                   & $ 10 $                  & $ 15 $       & $ 30 $       & $ 50 $       & $ 80 $         & $ 100 $      & $ 150 $      & $ 200 $        & $ 250 $        & $ 300 $        & $ 350 $        & $ 450 $        & $ 500$                          \\\hline\hline
                                        & \multirow{2}{*}{$\star_c{}\text{-SVD}$} & $ \|.\|_F $             & $ 589.0791 $ & $ 559.4948 $ & $ 474.6183 $ & $ 366.0083 $   & $ 199.2323 $ & $ 36.4333 $  & $ 4.1076e-12 $ & $ 4.1076e-12 $ & $ 4.1076e-12 $ & $ 4.1076e-12 $ & $ 4.1076e-12 $ & $ 4.1076e-12 $ & $ 4.1076e-12 $ \\
                                        &                                         & Time                    & $ 0.0506 $   & $ 0.0543 $   & $ 0.0653 $   & $ 0.08732312 $ & $ 0.1087 $   & $ 0.1273 $   & $ 0.1274 $     & $ 0.1232 $     & $ 0.1219 $     & $ 0.1221 $     & $ 0.1223 $     & $ 0.1221 $     & $ 0.1254 $     \\\cline{2-16}
                                        & \multirow{2}{*}{t-SVD}                  & $ \|.\|_F $             & $ 589.3193 $ & $ 559.7748 $ & $ 474.9585 $ & $ 366.3456 $   & $ 199.5488 $ & $ 36.7452 $  & $ 3.5690e-11 $ & $ 3.5690e-11 $ & $ 3.5690e-11 $ & $ 3.5690e-11 $ & $ 3.5690e-11 $ & $ 3.5690e-11 $ & $ 3.5690e-11 $ \\
                                        &                                         & Time                    & $ 0.1117 $   & $ 0.1207 $   & $ 0.1553 $   & $ 0.1987 $     & $ 0.2686 $   & $ 0.3150 $   & $ 0.3180 $     & $ 0.3159 $     & $ 0.3180 $     & $ 0.3149 $     & $ 0.3157 $     & $ 0.3168 $     & $ 0.3174 $     \\\hline

            \multirow{5}{*}{SyntheticD} & \multicolumn{2}{c}{k}                   & $ 10 $                  & $ 15 $       & $ 30 $       & $ 50 $       & $ 80 $         & $ 100 $      & $ 150 $      & $ 200 $        & $ 250 $        & $ 300 $        & $ 350 $        & $ 450 $        & $ 500$                          \\\hline\hline
                                        & \multirow{2}{*}{$\star_c{}\text{-SVD}$} & $ \|.\|_F $             & $ 589.0791 $ & $ 559.4948 $ & $ 474.6183 $ & $ 366.0083 $   & $ 199.2323 $ & $ 36.4333 $  & $ 4.1076e-12 $ & $ 4.1076e-12 $ & $ 4.1076e-12 $ & $ 4.1076e-12 $ & $ 4.1076e-12 $ & $ 4.1076e-12 $ & $ 4.1076e-12 $ \\
                                        &                                         & Time                    & $ 0.0506 $   & $ 0.0543 $   & $ 0.0653 $   & $ 0.08732312 $ & $ 0.1087 $   & $ 0.1273 $   & $ 0.1274 $     & $ 0.1232 $     & $ 0.1219 $     & $ 0.1221 $     & $ 0.1223 $     & $ 0.1221 $     & $ 0.1254 $     \\\cline{2-16}
                                        & \multirow{2}{*}{t-SVD}                  & $ \|.\|_F $             & $ 589.3193 $ & $ 559.7748 $ & $ 474.9585 $ & $ 366.3456 $   & $ 199.5488 $ & $ 36.7452 $  & $ 3.5690e-11 $ & $ 3.5690e-11 $ & $ 3.5690e-11 $ & $ 3.5690e-11 $ & $ 3.5690e-11 $ & $ 3.5690e-11 $ & $ 3.5690e-11 $ \\
                                        &                                         & Time                    & $ 0.1117 $   & $ 0.1207 $   & $ 0.1553 $   & $ 0.1987 $     & $ 0.2686 $   & $ 0.3150 $   & $ 0.3180 $     & $ 0.3159 $     & $ 0.3180 $     & $ 0.3149 $     & $ 0.3157 $     & $ 0.3168 $     & $ 0.3174 $     \\\hline
        \end{tabular}
    }
\end{table}
The experiments shows the the proposed method  gives the same approximation in error with more less time, in comparison with $t$-SVD for different $k$. Since this table is based on $k$, it can not show the relation between storage and quality of the approximations, for two methods. So in the next experiment , we reports in \ref{figrun2} and \ref{figrun2r} the quality of reconstruction (PSNR) vs Storage(Bytes) for two methods applied on real data in 
Table \ref{table_databaseName} for different $k$.  
\begin{figure}[ht]
    \centering
    \begin{subfigure}[t]{0.5\textwidth}
        \centering
        \includegraphics[width=0.65\textwidth]{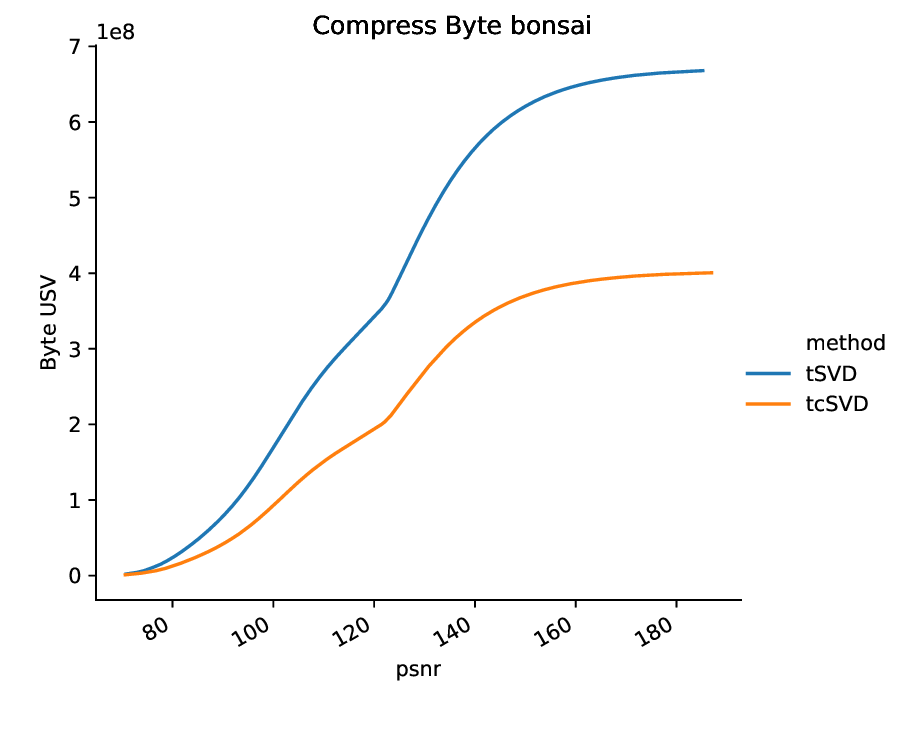}
    \end{subfigure}%
    \begin{subfigure}[t]{0.5\textwidth}
        \centering
        \includegraphics[width=0.65\textwidth]{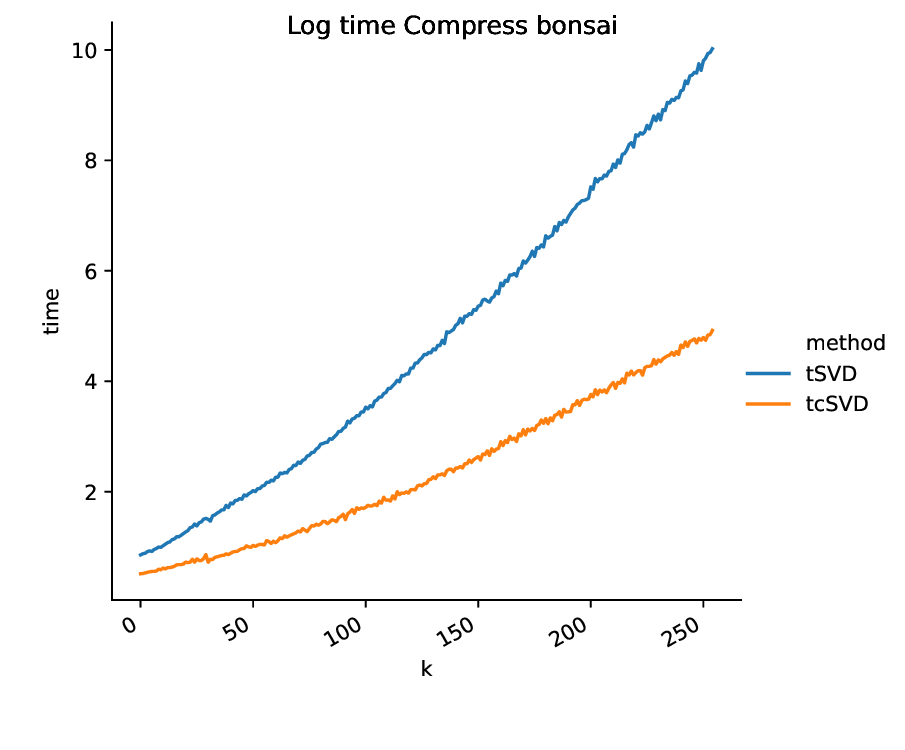}
    \end{subfigure}%
    \newline
    \begin{subfigure}[t]{0.5\textwidth}
        \centering
        \includegraphics[width=0.65\textwidth]{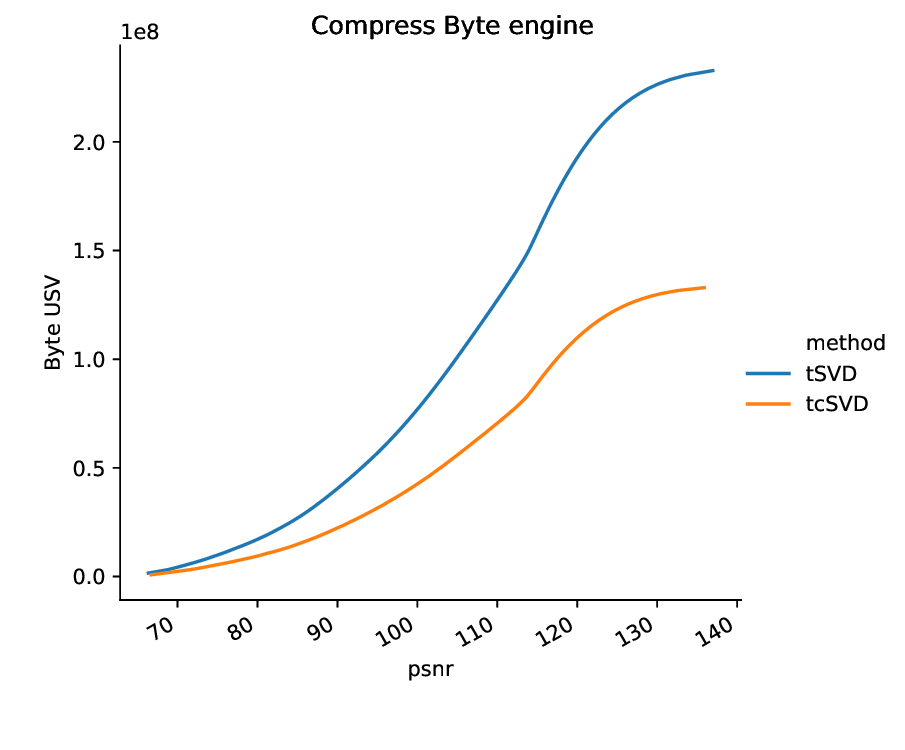}
    \end{subfigure}%
    \begin{subfigure}[t]{0.5\textwidth}
        \centering
        \includegraphics[width=0.65\textwidth]{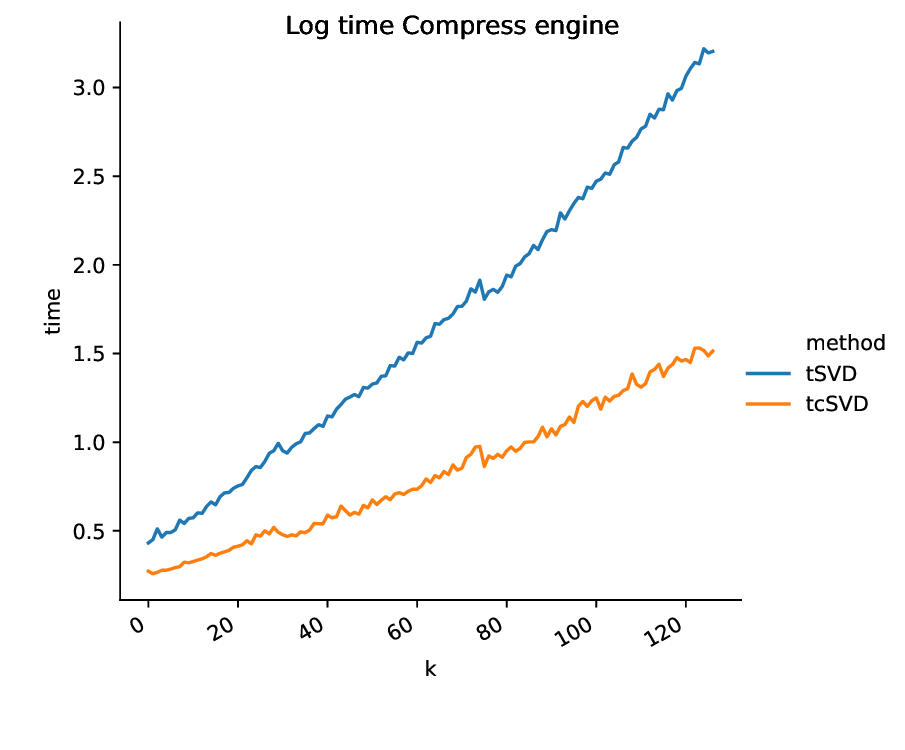}
    \end{subfigure}%
    \newline
    \begin{subfigure}[t]{0.5\textwidth}
        \centering
        \includegraphics[width=0.65\textwidth]{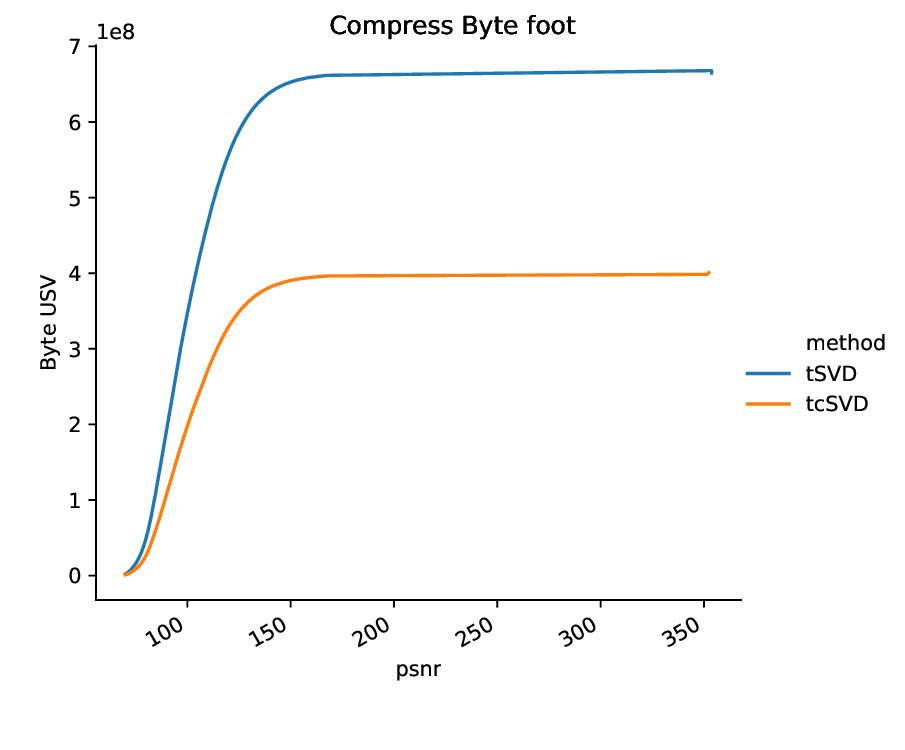}
    \end{subfigure}%
    \begin{subfigure}[t]{0.5\textwidth}
        \centering
        \includegraphics[width=0.65\textwidth]{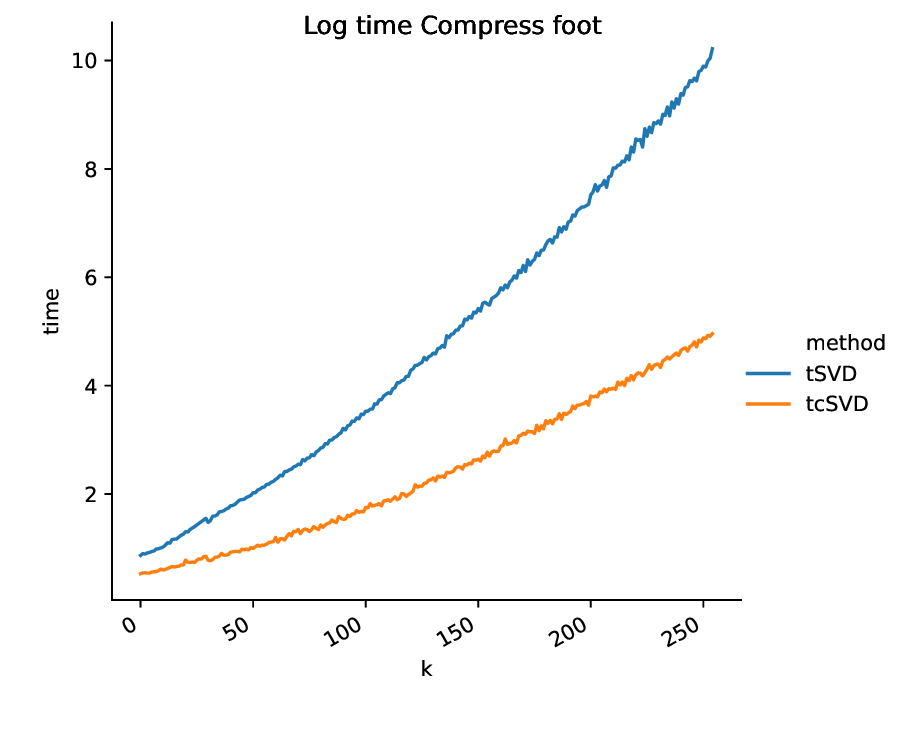}
    \end{subfigure}%
\newline
    \begin{subfigure}[t]{0.5\textwidth}
        \centering
        \includegraphics[width=0.65\textwidth]{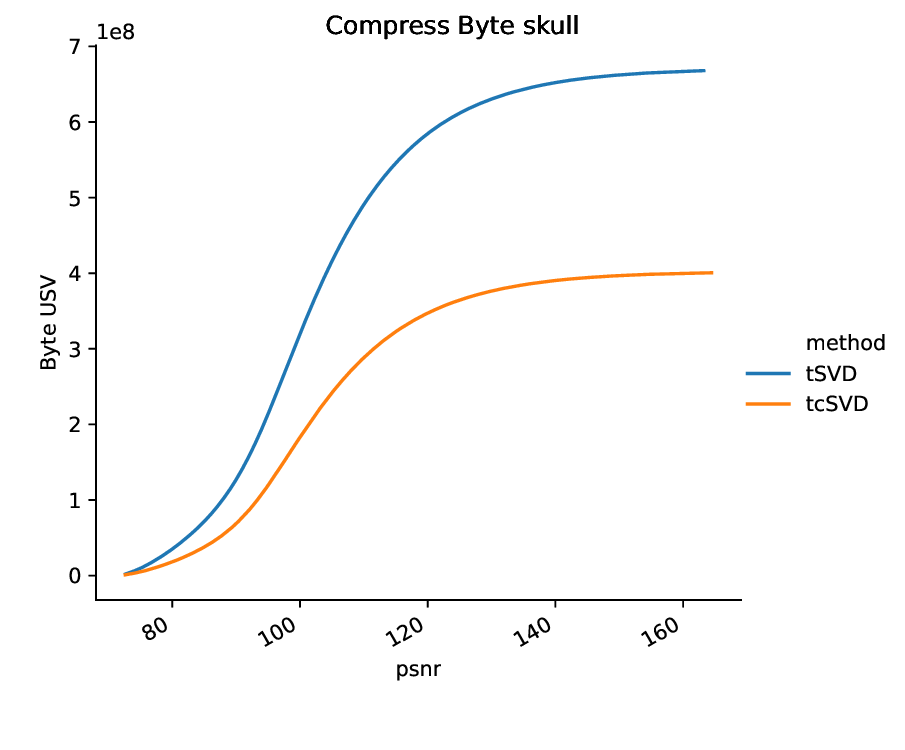}
    \end{subfigure}%
    \begin{subfigure}[t]{0.5\textwidth}
        \centering
        \includegraphics[width=0.65\textwidth]{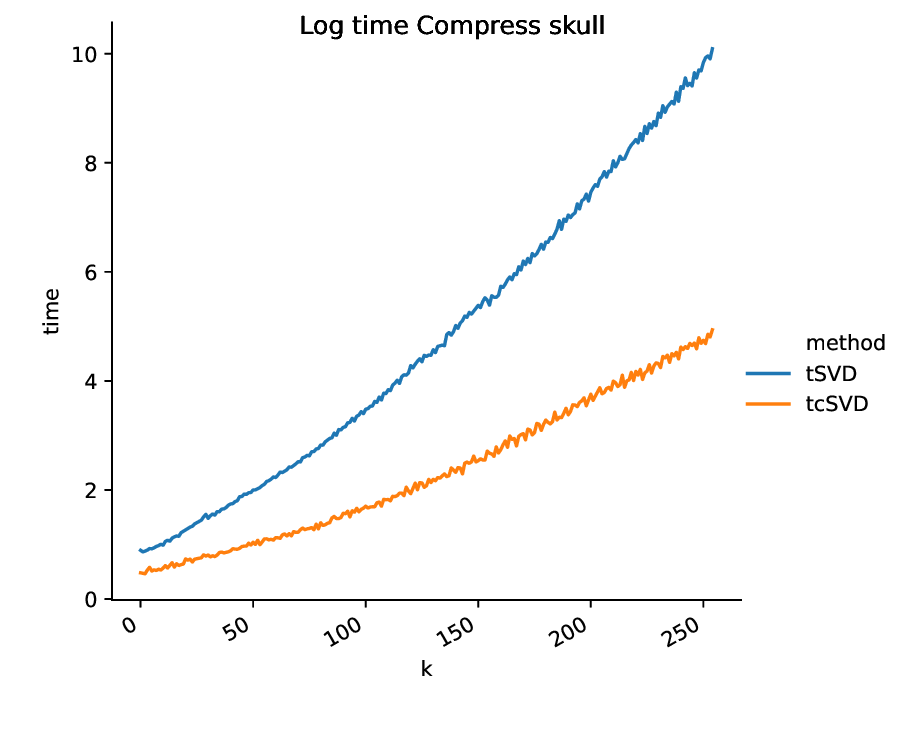}
    \end{subfigure}%
    \newline
    \begin{subfigure}[t]{0.5\textwidth}
        \centering
        \includegraphics[width=0.65\textwidth]{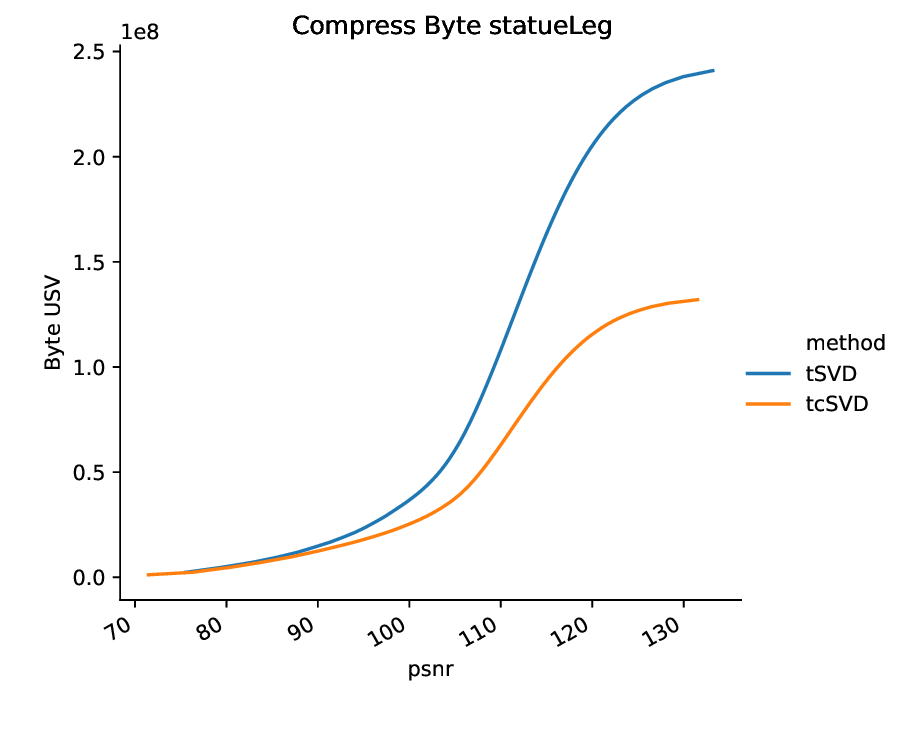}
    \end{subfigure}%
    \begin{subfigure}[t]{0.5\textwidth}
        \centering
        \includegraphics[width=0.65\textwidth]{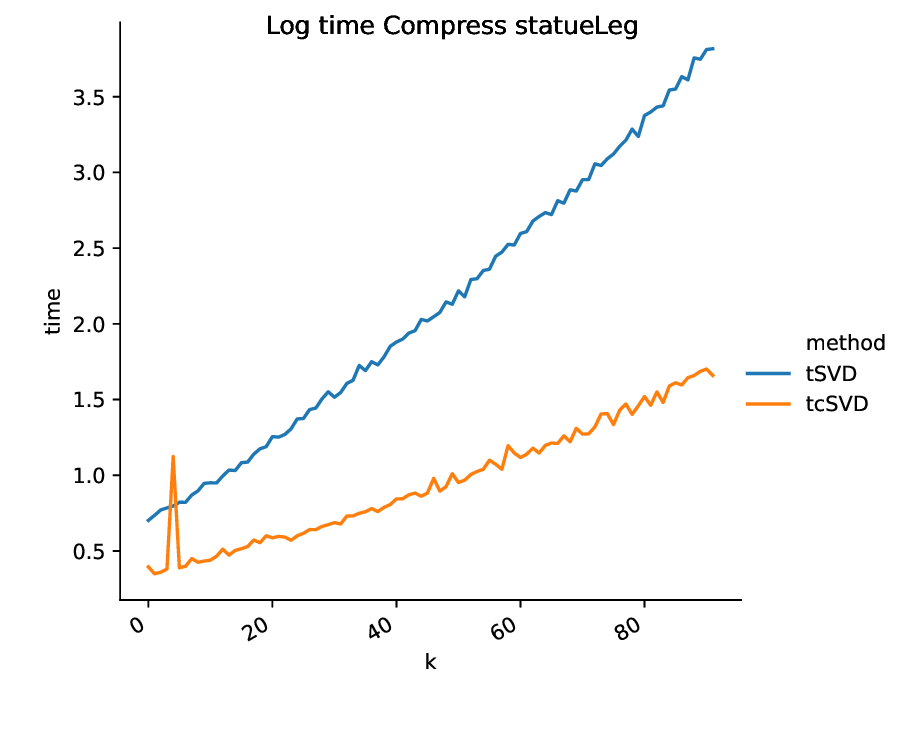}
    \end{subfigure}%
    \caption{(PSNR Vs storage) and ($k$ Vs Time) for $t$-SVD ans $\star_c$-SVD, on  some data from Table  \ref{table_databaseName}}\label{figrun2r}
\end{figure}
All experiments show the quality of the proposed $\star_c$-SVD over $t$-SVD method in compression of data.

\subsection{$ \star_c{}\text{-SVD} $ as Feature extraction method}
In this experiment, we first extracted new features using both the t-SVD and $\star_c{}\text{-SVD}$ approaches, and then we performed clustering and classification methods on the resulting data.
For Clustering method, we compute $\star_c{}\text{-SVD}$ ($\mathcal{A} $,$\mathbf{k_1}$), $\mathbf{k_1}:=(k_1,\cdots,k_{N-1})$ of the dataset $ \mathcal{A} \in R^{I_1\times I_2\times \dots \times I_N} $ to obtain data with reduced features $ \mathcal{A}_{k1} =(\mathcal{V}_{k1} \star_c\mathcal{A})^{\sf T} $. we used Normalized Mutual Information (NMI) criterion to compare the result of kmeans method on reduced data obtained by  t-SVD and $\star_c{}\text{-SVD}$ approaches.
You can see the details of k-means clustering on different datasets in Table \ref{Taclustek-means}.

\begin{figure}[ht]
	\begin{subfigure}[t]{0.5\textwidth}
		\centering
		\includegraphics[width=0.65\textwidth]{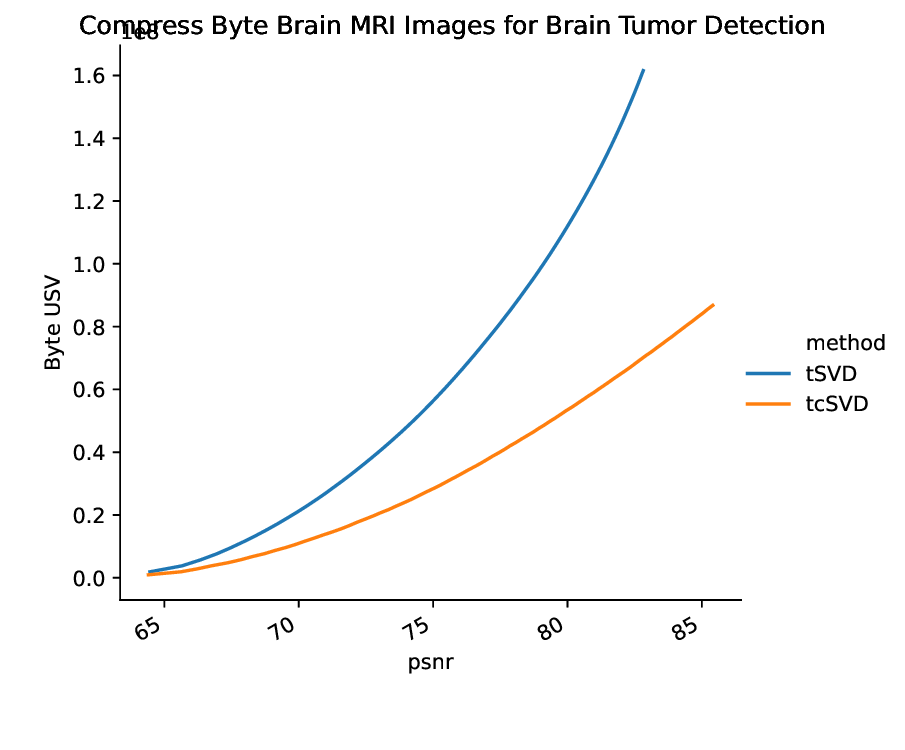}
	\end{subfigure}%
	\begin{subfigure}[t]{0.5\textwidth}
		\centering
	\includegraphics[width=0.65\textwidth]{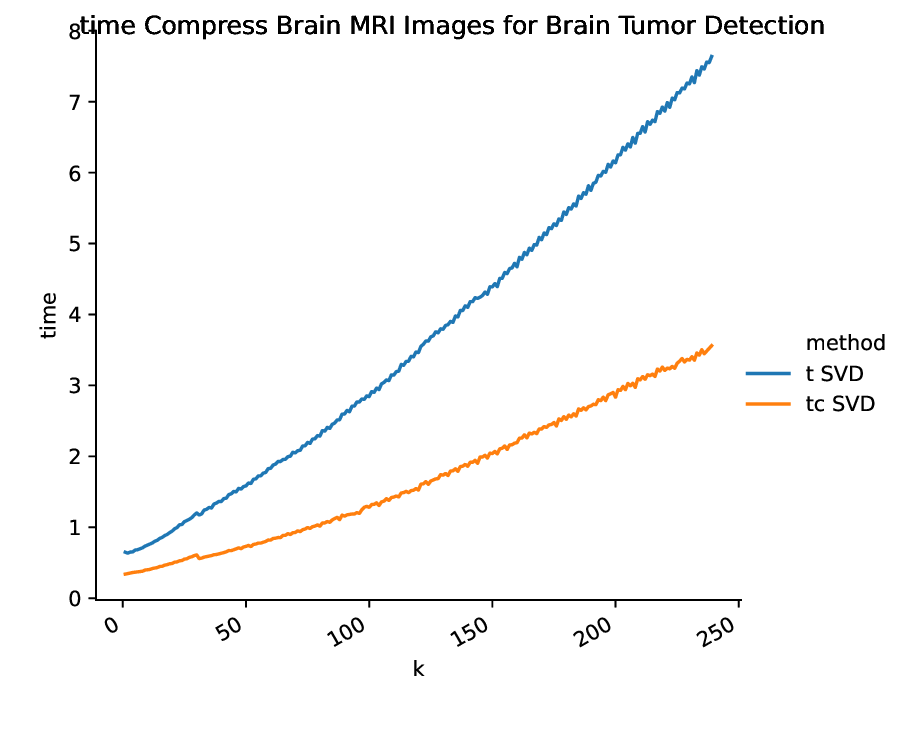}
	\end{subfigure}%
	\newline
	\begin{subfigure}[t]{0.5\textwidth}
		\centering
		\includegraphics[width=0.65\textwidth]{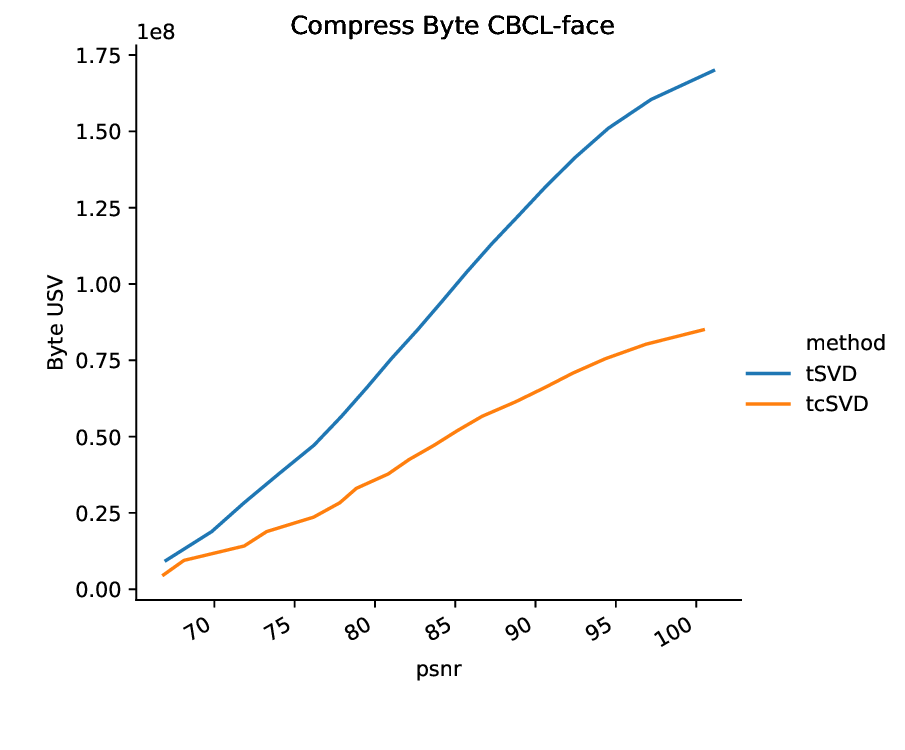}
	\end{subfigure}%
	\begin{subfigure}[t]{0.5\textwidth}
		\centering
		\includegraphics[width=0.65\textwidth]{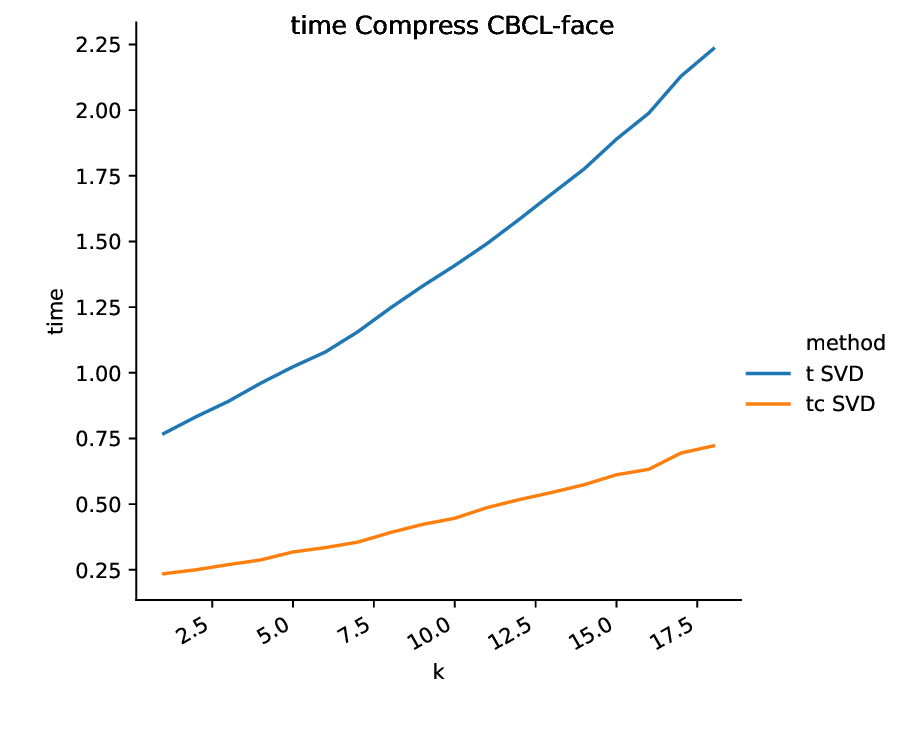}
	\end{subfigure}%
	\newline
	\begin{subfigure}[t]{0.5\textwidth}
		\centering
		\includegraphics[width=0.65\textwidth]{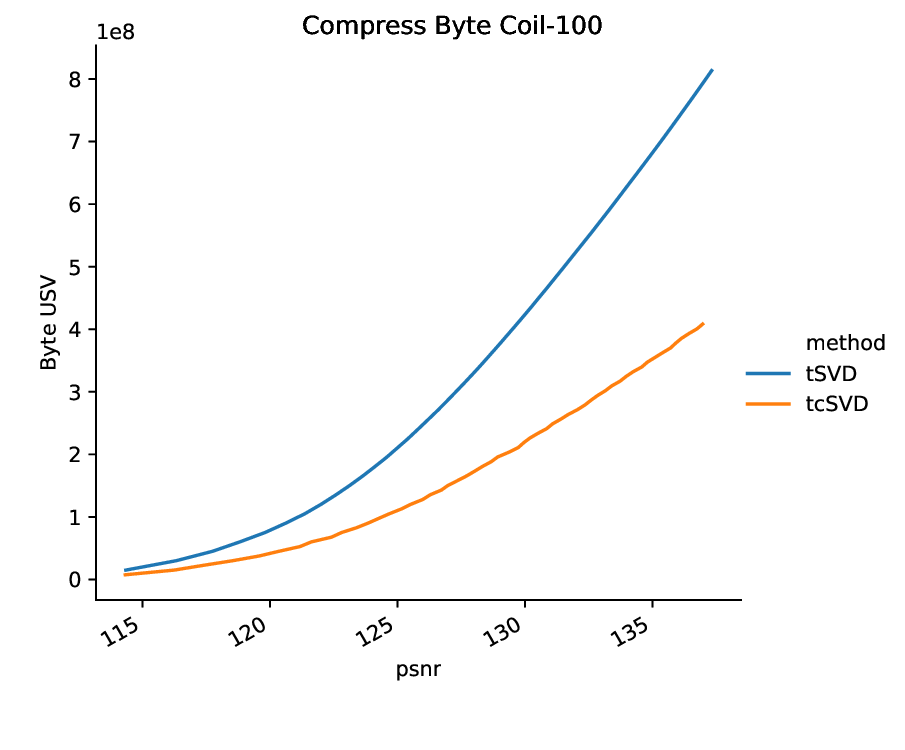}
	\end{subfigure}%
	\begin{subfigure}[t]{0.5\textwidth}
		\centering
		\includegraphics[width=0.65\textwidth]{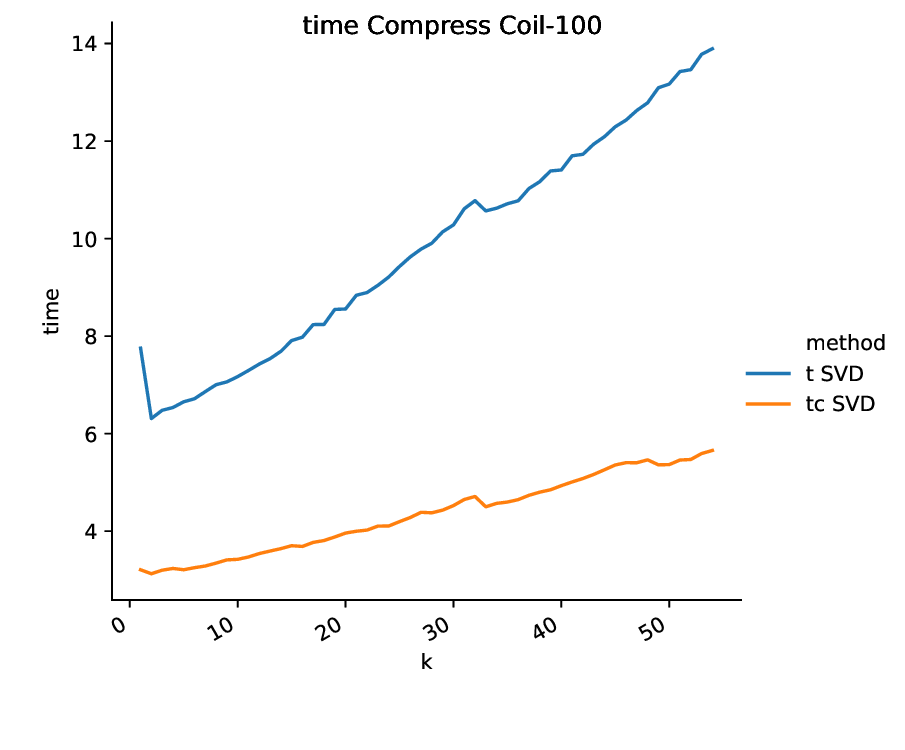}
	\end{subfigure}%
	\newline
	\begin{subfigure}[t]{0.5\textwidth}
		\centering
		\includegraphics[width=0.65\textwidth]{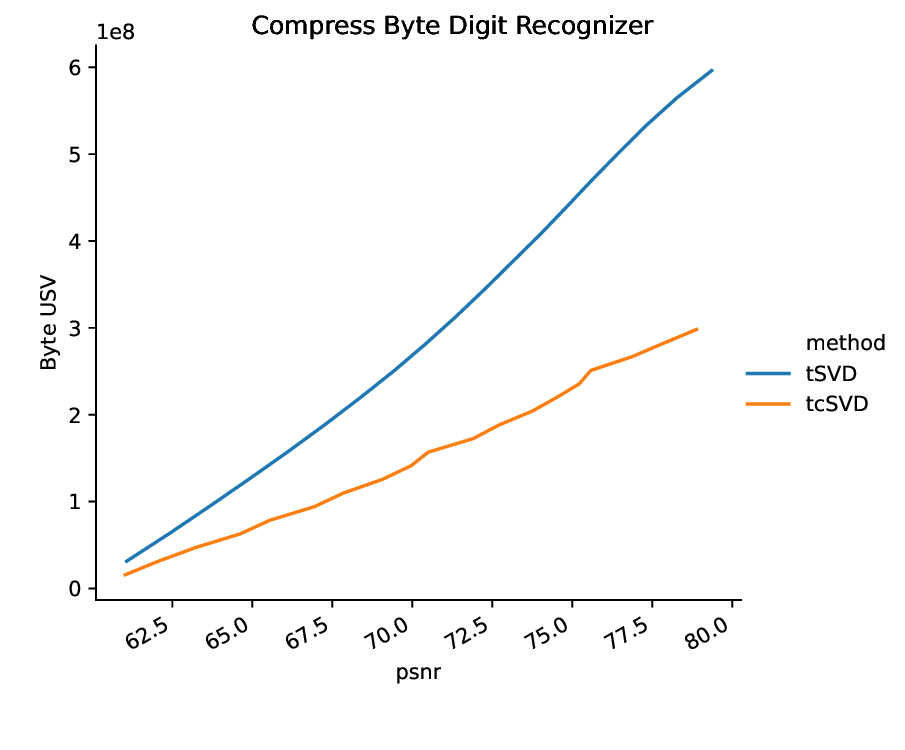}
	\end{subfigure}%
	\begin{subfigure}[t]{0.5\textwidth}
		\centering
		\includegraphics[width=0.65\textwidth]{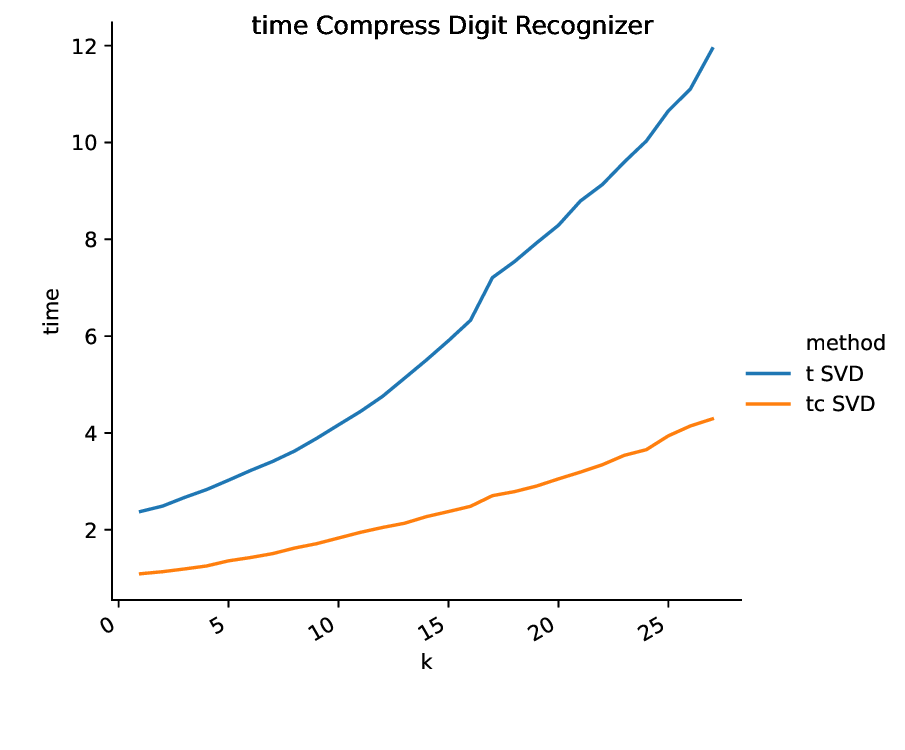}
	\end{subfigure}%
	\newline
	\begin{subfigure}[t]{0.5\textwidth}
		\centering
		\includegraphics[width=0.65\textwidth]{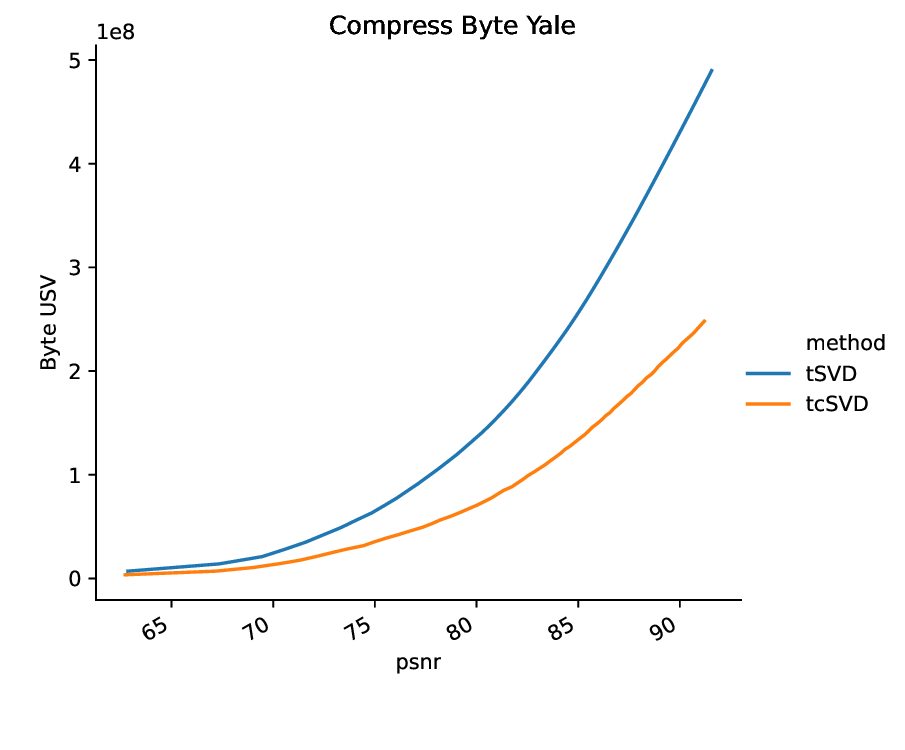}
	\end{subfigure}%
	\begin{subfigure}[t]{0.5\textwidth}
		\centering
		\includegraphics[width=0.65\textwidth]{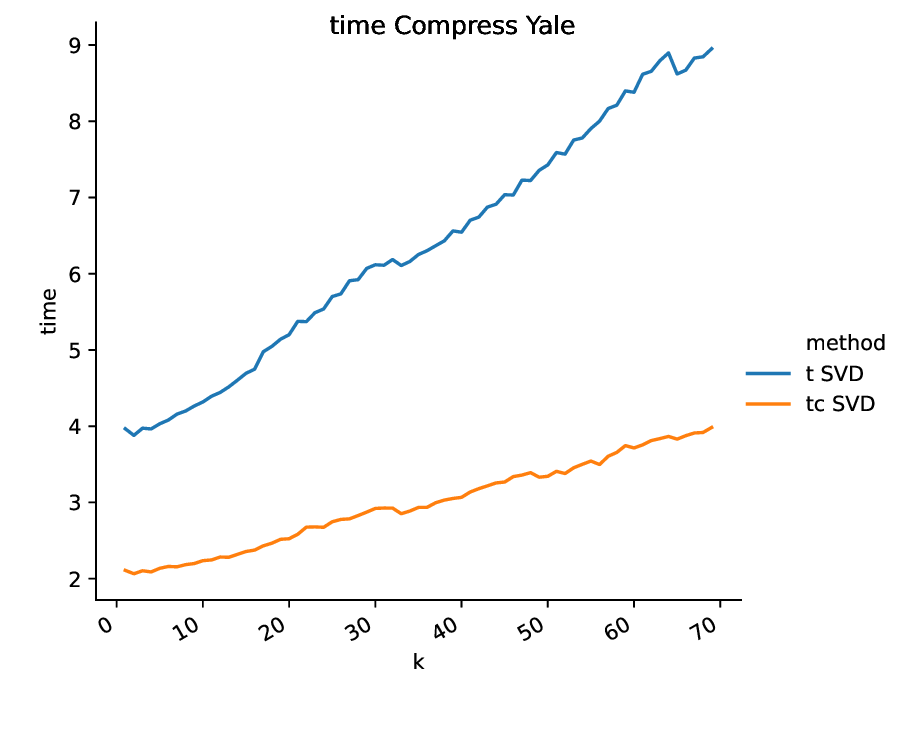}
	\end{subfigure}%
	
	\caption{(PSNR Vs storage) and ($k$ Vs Time) for $t$-SVD ans $\star_c$-SVD, on  some data from Table \ref{table_databaseName}}\label{figrun2}
\end{figure}

To perform classification, we used the truncated $\star_c{}\text{-SVD}$ method, as described in Algorithm \ref{AlgClassification}, where only $k$ columns of each frontal slice of $\mathcal{U}$ and $\mathcal{V}$ are selected \cite{Kilmer2}. 
\begin{algorithm}[h!]
    \caption{$\star_c{}\text{-Product}$ for classification method}\label{AlgClassification}
    \begin{algorithmic}[1]
        \Require  $\text{train} \in \mathbb{R}^{I_{1} \times I_{2} \times \cdots \times I_{N}} $,
        $\text{test} \in \mathbb{R}^{I_{1} \times \cdots \times I_N}$, $ k \in \mathbb{R}$,
        \Ensure label test
        \State $\mathcal{M} \leftarrow$ mean data
        \State $\mathcal{A} \leftarrow$ mean deviation from of train data
        \State $\mathcal{U}_k,\mathcal{S}_k,\mathcal{V}_k=\text{tc{-}SVD}(\mathcal{A},k)$
        \State $\mathcal{L}_k=\mathcal{U}^T_k\star_c \mathcal{A}$
        \State $\mathcal{T} \leftarrow \text{test data}-\mathcal{M}$
        \State $\mathcal{J}_k=\mathcal{U}^T_k\star_c \mathcal{T}$
        \State Calculate $\|\mathcal{L}_k- \mathcal{J}_k^{\bar{i}} \|_F$ for
        $\bar{i}=i_1,\cdots,i_N $
    \end{algorithmic}
\end{algorithm}
We report the accuracy of both t-SVD and $\star_c{}\text{-SVD}$ for various datasets and different values of $k$ in Table \ref{Taclass}. The running times of these methods on all datasets are shown in Figure \ref{figrun3}. Our method demonstrated significantly lower running times and better accuracy for lower values of $k$ than the t-SVD method, as indicated by Table \ref{Taclass} and Figure \ref{figrun3}.
\begin{figure}[ht]
    \centering
    \begin{subfigure}[t]{0.32\textwidth}
        \centering
        \includegraphics[width=0.95\textwidth]{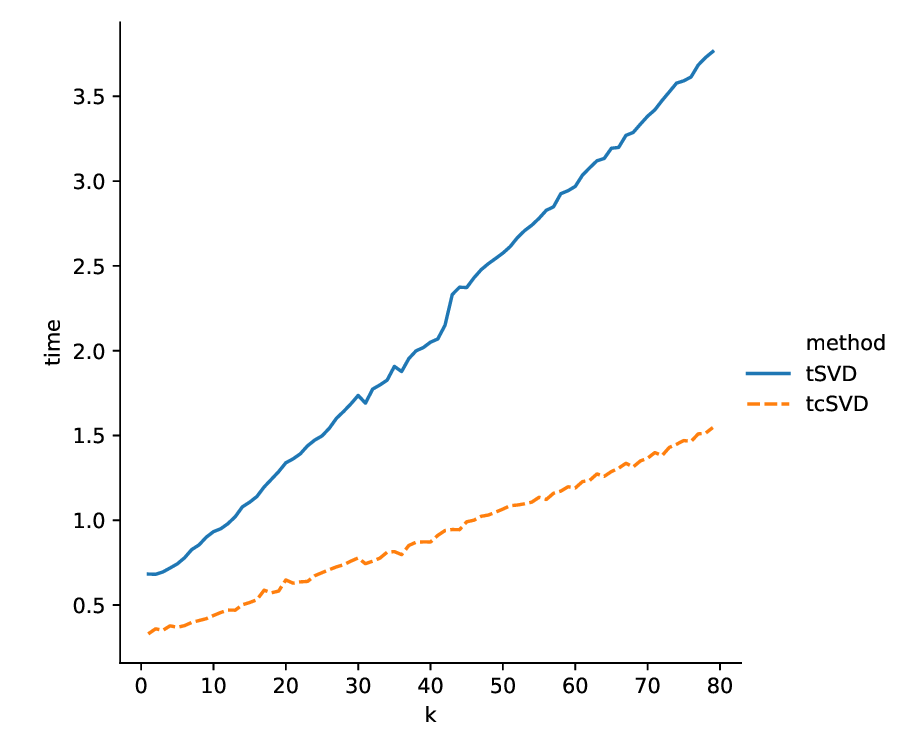}
        \caption{Brain MRI}
    \end{subfigure}%
    \begin{subfigure}[t]{0.32\textwidth}
        \centering
        \includegraphics[width=0.95\textwidth]{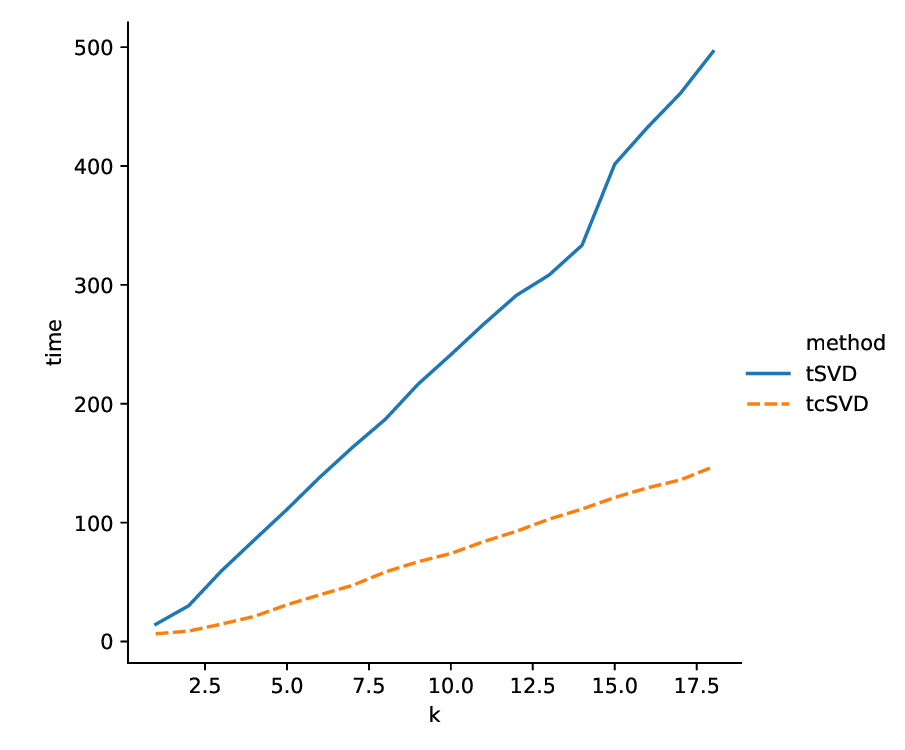}
        \caption{ CBCL-face}
    \end{subfigure}%
    \begin{subfigure}[t]{0.32\textwidth}
        \centering
        \includegraphics[width=0.95\textwidth]{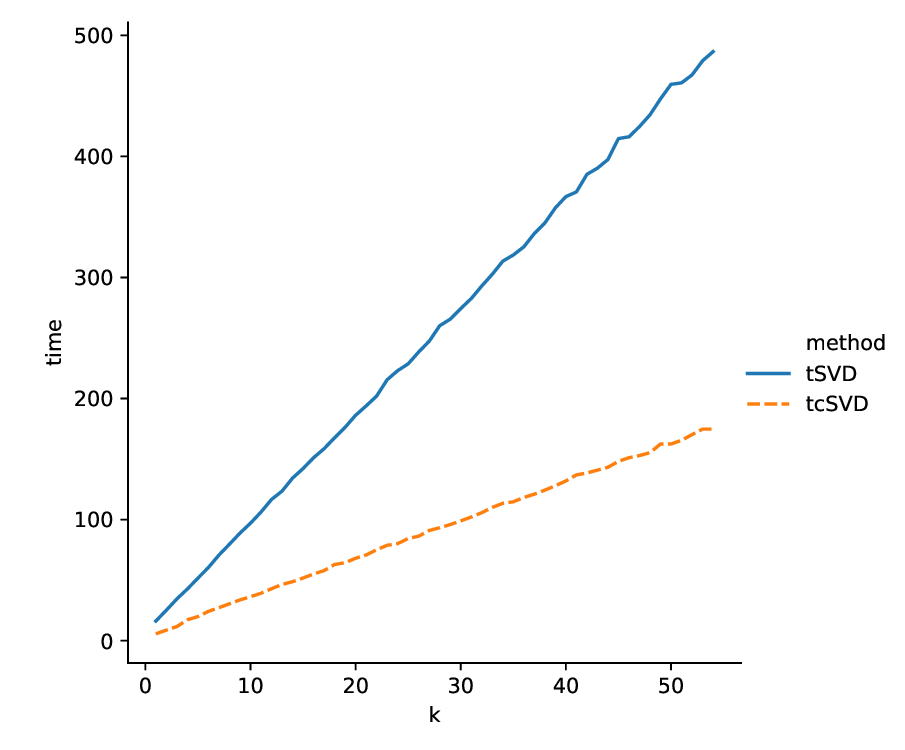}
        \caption{ Coil-100}
    \end{subfigure}%
    \newline
    \begin{subfigure}[t]{0.32\textwidth}
        \centering
        \includegraphics[width=0.95\textwidth]{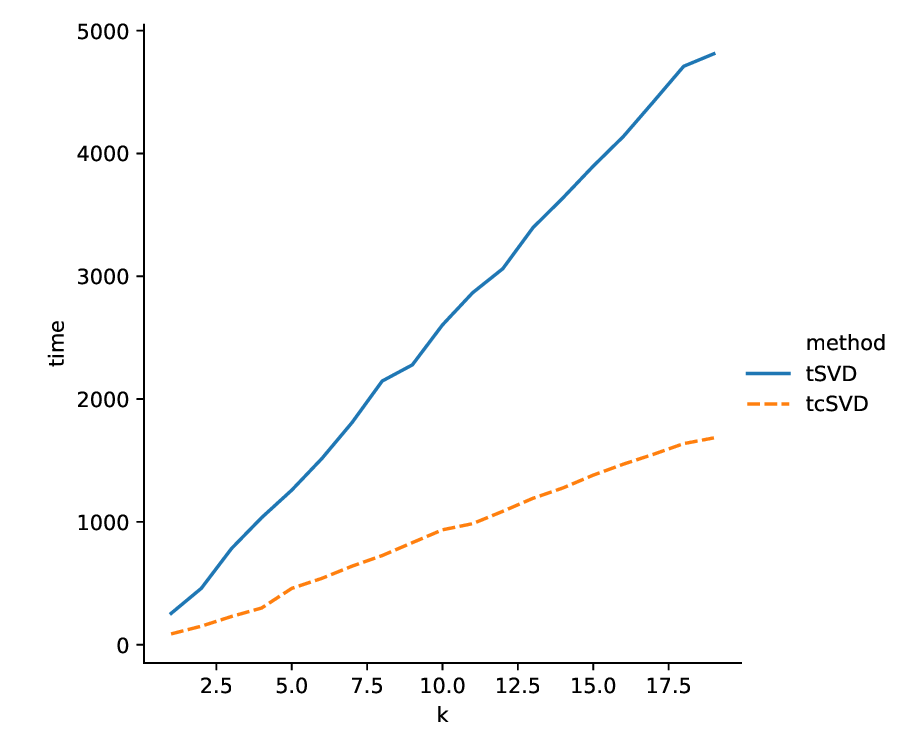}
        \caption{MNIST}
    \end{subfigure}%
    \begin{subfigure}[t]{0.32\textwidth}
        \centering
        \includegraphics[width=0.95\textwidth]{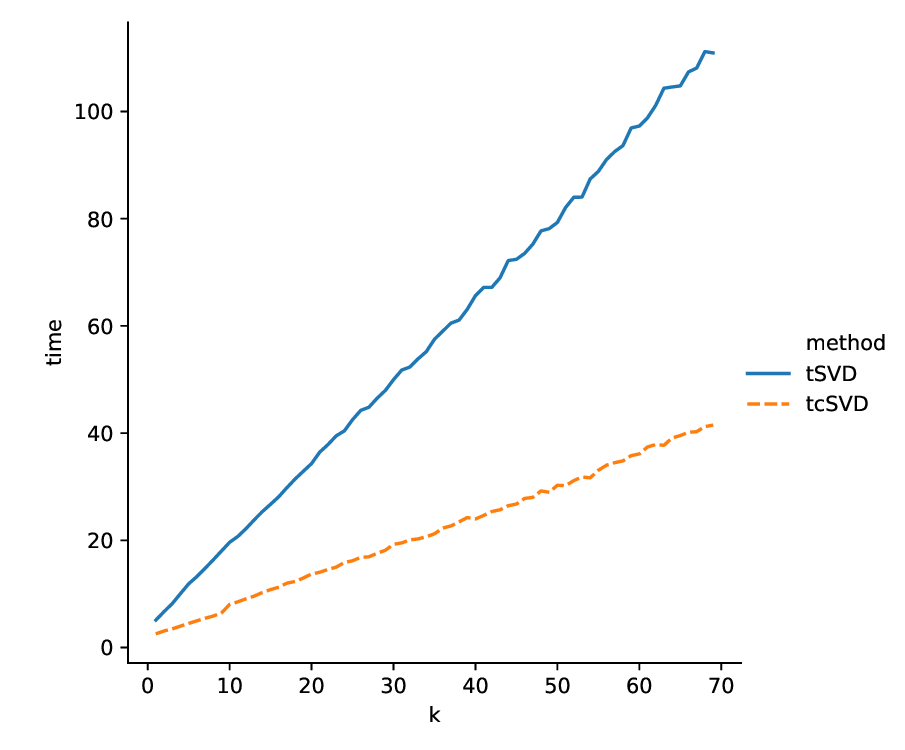}
        \caption{ Yale B}
    \end{subfigure}%

    \caption{The running time of classification based on $t$-SVD and $\star_c$-SVD methods according to differennt reductions number $k$.}\label{figrun3}
\end{figure}



\begin{table}[h]
    \caption{\footnotesize{Results of apply methods for clustering and classification tasks}}
    \begin{subtable}[t]{0.48\textwidth}
        \caption{\footnotesize{k-means method with best NMI}\label{Taclustek-means}}
        \resizebox{1\linewidth}{!}{
            \begin{tabular}[t]{l|lrr|cc}
                \toprule
                Dataset Name               & decompositin & k1 & k2 & NMI      & time     \\
                \midrule
                \multirow{2}{*}{Letters}   & tSVD         & 3  & 2  & 0.377755 & 1.65931  \\
                {}                         & tcSVD        & 3  & 3  & 0.398108 & 1.10069  \\ \cline{2-6}
                \multirow{2}{*}{MNIST}     & tSVD         & 5  & 21 & 0.416287 & 3.86368  \\
                {}                         & tcSVD        & 6  & 26 & 0.371089 & 2.06664  \\ \cline{2-6}
                \multirow{2}{*}{PIE\_pose} & tSVD         & 2  & 29 & 0.909975 & 0.208724 \\
                {}                         & tcSVD        & 22 & 20 & 0.913545 & 0.208813 \\ \cline{2-6}
                \multirow{2}{*}{PenDigits} & tSVD         & 3  & 3  & 0.627107 & 1.14224  \\
                {}                         & tcSVD        & 3  & 2  & 0.644992 & 0.825843 \\ \cline{2-6}
                \multirow{2}{*}{USPS}      & tSVD         & 5  & 13 & 0.370718 & 0.697890 \\
                {}                         & tcSVD        & 7  & 14 & 0.344233 & 0.343368 \\
                \bottomrule
            \end{tabular}
        }
    \end{subtable}
    \begin{subtable}[t]{0.48\textwidth}
        \caption{\footnotesize{Accuracy of classification on different dataset}\label{Taclass}}
        \resizebox{1\linewidth}{!}{
            \begin{tabular}{l|ll|cc}
                \toprule
                Data Set                   & method & k  & accuracy score & time        \\
                \midrule
                \multirow{2}{*}{Brain MRI} & tSVD   & 49 & 0.803922       & 2.544043    \\
                                           & tcSVD  & 62 & 0.803922       & 1.235369    \\  \cline{2-5}
                \multirow{2}{*}{CBCL-face} & tSVD   & 18 & 0.965357       & 496.210606  \\
                                           & tcSVD  & 18 & 0.965357       & 146.935816  \\ \cline{2-5}
                \multirow{2}{*}{Coil-100}  & tSVD   & 2  & 0.983333       & 25.173541   \\
                                           & tcSVD  & 2  & 0.984722       & 8.707815    \\ \cline{2-5}
                \multirow{2}{*}{MNIST}     & tSVD   & 8  & 0.972821       & 2146.879512 \\
                                           & tcSVD  & 9  & 0.972107       & 831.023561  \\ \cline{2-5}
                \multirow{2}{*}{Yale}      & tSVD   & 41 & 0.772257       & 67.171898   \\
                                           & tcSVD  & 47 & 0.772257       & 80.1        \\\cline{2-5}
                \multirow{2}{*}{Cifar}     & tSVD   & 9  & 0.62           & 67.171898   \\
                                           & tcSVD  & 8  & 0.63           & 28.30.598   \\\cline{2-5}
                \multirow{2}{*}{3D-MNIST}  & tSVD                                       \\
                                           & tcSVD                                      \\
                \bottomrule
            \end{tabular}}
    \end{subtable}
\end{table}

\section{Conclusion}
In this paper, we presented a new tensor-tensor product, $\star_c{}\text{-SVD}$, which is based on the block convolution operation between two tensors. We demonstrated that this product provides a general tensor decomposition method that extends the SVD decomposition to arbitrary tensors, and we showed that the coefficient matrix in our linear system can be diagonalized by the Discrete Cosine Transform (DCT), which is faster than the Fast Fourier Transform. Our experimental results on both synthetic random tensors and well-known datasets showed that our proposed method is faster and more accurate than the traditional t-SVD method. Our $\star_c{}\text{-SVD}$ product has potential applications in various fields such as image processing, signal processing, and machine learning. Future research can explore more efficient algorithms to compute the $\star_c{}\text{-SVD}$ decomposition and investigate its applications in other areas.

\vskip 0.2in
\bibliography{cas-refs}

\end{document}